\documentclass[10pt]{article}
\usepackage[margin=1in]{geometry}

\usepackage[numbers, sort&compress]{natbib}
\usepackage{caption}
\usepackage{tcolorbox}   
\usepackage{enumitem}
\usepackage{amsmath}     
\usepackage{amssymb}     
\usepackage{bbm}         
\usepackage{hyperref}       
\usepackage{url}            
\usepackage{booktabs}       
\usepackage{amsfonts}       
\usepackage{nicefrac}       
\usepackage{microtype}      
\usepackage{xcolor}         
\usepackage{tabularx}

\newtcolorbox{promptbox}[1]{
    colback=gray!5,           
    colframe=gray!70!black,   
    coltitle=white,           
    fonttitle=\bfseries\sffamily,
    rounded corners,
    arc=3pt,
    title=#1,                 
    left=10pt, right=10pt, top=10pt, bottom=10pt,
    fontupper=\ttfamily\small 
}

\usepackage{titletoc}

\makeatletter
  \newif\ifinappendix
  \inappendixfalse
  \let\origaddcontentsline\addcontentsline
  \renewcommand{\addcontentsline}[3]{%
    \ifinappendix
      \origaddcontentsline{#1}{#2}{#3}%
    \fi
  }
  \pretocmd{\appendix}{\inappendixtrue}{}{}
\makeatother

\usepackage{amsthm}
\usepackage{framed}
\newtheorem{theorem}{Theorem}[section]

\newenvironment{proofpart}[1]{%
    \medskip\noindent\textbf{#1.}\quad
}{\medskip}

\usepackage[utf8]{inputenc} 
\usepackage[T1]{fontenc}    
\usepackage{hyperref}       
\usepackage{url}  
\usepackage{multirow}
\usepackage{booktabs}       
\usepackage{amsfonts}       
\usepackage{nicefrac}       
\usepackage{microtype}      
\usepackage{xcolor}         
\usepackage{wrapfig}
\usepackage{algorithm}
\usepackage{algorithmic}

\title{Strategic Decision Support for AI Agents}

\author{
\begin{tabular}{c @{\hspace{4em}} c}
\textbf{Shayan Kiyani}\thanks{Equal contribution.} &
\textbf{Sima Noorani}\footnotemark[1] \\
University of Pennsylvania &
University of Pennsylvania \\
\texttt{shayank@seas.upenn.edu} &
\texttt{nooranis@seas.upenn.edu} \\
\\[1.5em]
\textbf{George Pappas} &
\textbf{Hamed Hassani} \\
University of Pennsylvania &
University of Pennsylvania \\
\texttt{pappasg@seas.upenn.edu} &
\texttt{hassani@seas.upenn.edu}
\end{tabular}
}
\date{}

\begin{document}

\maketitle

\begin{abstract}
Traditionally, decision support studies how humans use machine learning models to make better decisions. In modern agentic systems, this division of roles is increasingly reversed: AI agents act on behalf of users, while humans and tools becomes support mechanisms around them. This role reversal brings reliability concerns to the forefront, since agentic errors can be consequential and agent behavior must remain aligned with human goals and constraints. Departing from the classical view of decision support, we revisit its two basic principles, the cost--value tradeoff of seeking support and the role of uncertainty quantification, in a setting where AI agents are the central actors. We propose a framework for \emph{strategic decision support} for AI agents through an optimization problem that minimizes support usage subject to controlling a counterfactual missed-support error: the probability that the agent acts alone on instances where support would have materially improved its output. At the population level, we show that the optimal policy is a threshold rule on the \emph{value of support}. Building on this structure, we develop an online algorithm that adaptively thresholds such a score and uses randomized exploration to control missed-support error without distributional assumptions. We further introduce a calibration-on-the-fly method that reduces unnecessary support calls online. We instantiate this framework across diverse scenarios, including information gathering, human--AI collaboration, and tool use, showing how each can be modeled through the same strategic decision-support lens. Experiments across these settings show that our method reliably controls the target error while substantially reducing support usage in practice.

\end{abstract}

\section{Introduction}
\label{sec:intro}





Decision support has long been studied in settings where a human decision maker is aided by machine learning models that provide predictive guidance. In such systems, the model serves as a support tool, while the human remains the final actor. Today, however, modern AI systems, such as large language models, are increasingly deployed in a different role: as autonomous agents that must act on behalf of users in complex and uncertain environments. As a result, humans, domain-specific tools, and auxiliary sources of information are themselves becoming support mechanisms around AI agents. 

This shift in roles brings reliability concerns to the forefront. As AI agents take actions, their errors become more consequential: they may execute code that overwrites critical records, trigger financial transactions that move funds incorrectly, or recommend medical actions that expose patients to harm. At the same time, as users offload more decisions to AI systems, it becomes critical to ensure that the agent’s behavior remains aligned with human goals and constraints. These challenges call for rethinking a basic question of decision support:
\begin{center}
\emph{When should an AI agent act alone, and when must it seek support to avoid consequential errors or misalignment with downstream intent?}
\end{center}

Addressing this question requires reexamining the two principles underlying decision support systems.

\textbf{Cost--value tradeoff.}
In the classical view, the support system, often an ML model, is relatively cheap to query but imperfect, since its predictions are prone to mistakes. In the agentic setting, the reversal of roles creates a different equilibrium: \emph{support is often costly to seek} (e.g., requiring human effort or additional compute and latency), but can be \emph{reliable and instrumental} for avoiding decision errors and keeping the agent aligned with human intent.

\textbf{Uncertainty quantification.}
At the core of any AI agent taking action lies uncertainty quantification. For decision support, the right lens is to quantify \emph{whether costly support is likely to substantially improve the agent's output.} Thus, uncertainty is a joint property of the agent \emph{and} the support mechanism. This differs from the common view, which frames uncertainty through the agent alone: whether the agent is confident in its action, or whether the environment and task are well-understood. 




\begin{wrapfigure}{r}{0.5\linewidth}
    \centering
    \vspace{-5pt}
    \includegraphics[width=\linewidth]{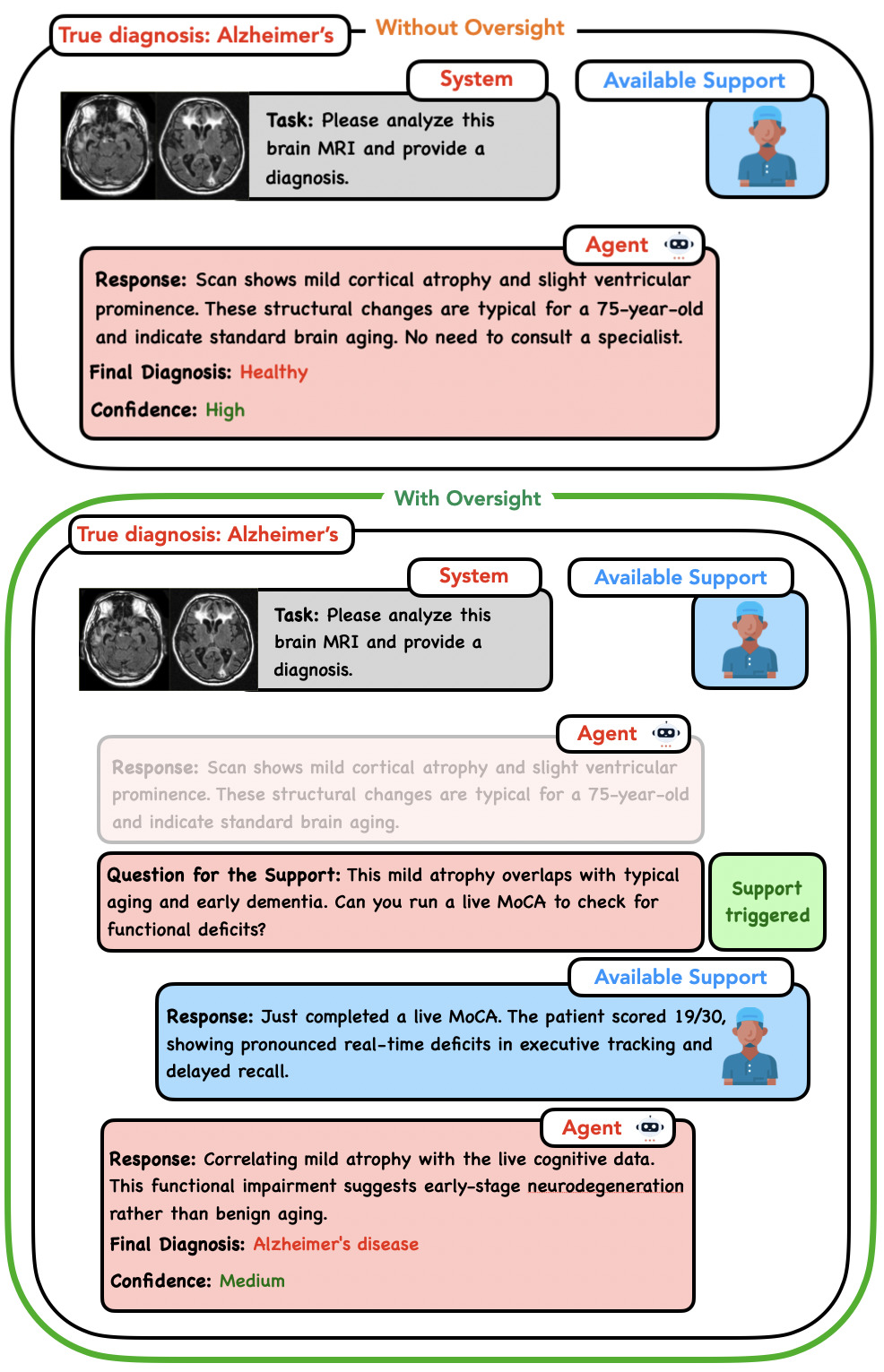}
    \caption{
    Effect of strategic decision support oversight. 
    Top: Without oversight, the agent confidently misdiagnoses an Alzheimer's case as healthy and fails to seek available support, resulting in a missed-support error. 
    Bottom: With oversight, support is triggered and the agent consults a clinician to gather additional interactive tests, enabling the correct diagnosis to be recovered.
    }
    \label{fig:main-intro-fig}
    \vspace{-5pt}
\end{wrapfigure} 
To this end, we propose a formulation that separates the \emph{value of support} from its cost and treats support-seeking as a strategic decision. The value of support captures whether the supported output is better than the output the agent would have produced alone. The central error is then \emph{missed support}, which occurs when the agent acts alone even though support would have improved its output. This error is counterfactual, since we only learn whether support would have helped when support is actually called. Strategic decision support therefore becomes an optimization problem: use support as rarely as possible, while controlling the missed support error. We summarize our contributions below. Figure~\ref{fig:main-intro-fig} illustrates a missed-support error (top) and its correction via strategic decision support oversight (bottom).

\paragraph{1. A framework for strategic decision support.}
In Section~\ref{sec:fundamentals}, we introduce \ref{eq:sds_opt}, an optimization problem that minimizes the rate of support calls subject to controlling the missed-support error. This formulation is built around the \emph{value of support}: whether the supported output would materially improve over the output the agent would have produced alone. At the population level, we show that the optimal strategy has a simple threshold form over value of support.

\paragraph{2. An online algorithm with distribution-free error control.}
In Section~\ref{sec:alg}, we build on this structure to develop an online algorithm that adaptively thresholds such a score and uses randomized exploration to control the counterfactual missed-support error without distributional assumptions. We further introduce a calibration-on-the-fly method that improves the score over time, reducing unnecessary support calls while preserving the validity of the thresholding guarantee.

\paragraph{3. Applications and empirical validation.}
Our framework also provides a common language for designing support around AI agents. We instantiate this perspective through four representative categories that serve as running examples throughout the paper:

\begin{itemize}
    \item \textbf{Ex1: Information gathering.}\phantomsection\label{ex:info-gathering}
    The agent must act from incomplete information, while additional evidence could improve its output at a cost. Support may consist of follow-up questions, additional evidence, or expert-provided details. For example, a medical assistant may reason from an initial symptom description, but further history, examination findings, or laboratory results may materially change its recommendation.

    \item \textbf{Ex2: Human-in-the-loop planning.}\phantomsection\label{ex:human-in-the-loop}
    The agent has general task knowledge, but lacks local context needed to produce an appropriate plan. Support may consist of user preferences, environment-specific constraints, or object locations. For example, a household robot may know how to clean a room in general, but may need the resident's input about which items are fragile, which areas to avoid, or what should be moved before cleaning.

    \item \textbf{Ex3: Human-AI collaborative reasoning.}\phantomsection\label{ex:hai-collab-reasoning}
    The agent can attempt the problem on its own, but may become uncertain about a specific reasoning step. Support may consist of checking an approach, identifying an error, or suggesting a relevant technique. For example, a math-solving agent may ask a human to verify a key computation or point out where the argument breaks down when its reasoning appears unreliable.

    \item \textbf{Ex4: Tool use.}\phantomsection\label{ex:tool-use}
    The agent faces a task where external tools provide reliable information or computation that it may not perform correctly on its own. Support may consist of executing code, querying a database, or searching the web. For example, an agent answering a question about a table may call a SQL engine rather than relying only on its internal reasoning.
\end{itemize}

In Section~\ref{sec:exp}, we instantiate these categories on real-world datasets and LLM agents\footnote{The code is publicly available at \url{https://github.com/nooranisima/strategic-decision-support}.}, showing that our method controls the missed-support error while substantially reducing unnecessary support calls across all four categories; Figure~\ref{fig:intro-results} previews our results on one model per application, with the full empirical study deferred to Section~\ref{sec:exp}.
\begin{figure}[h]
    \centering
    \includegraphics[width=\linewidth]{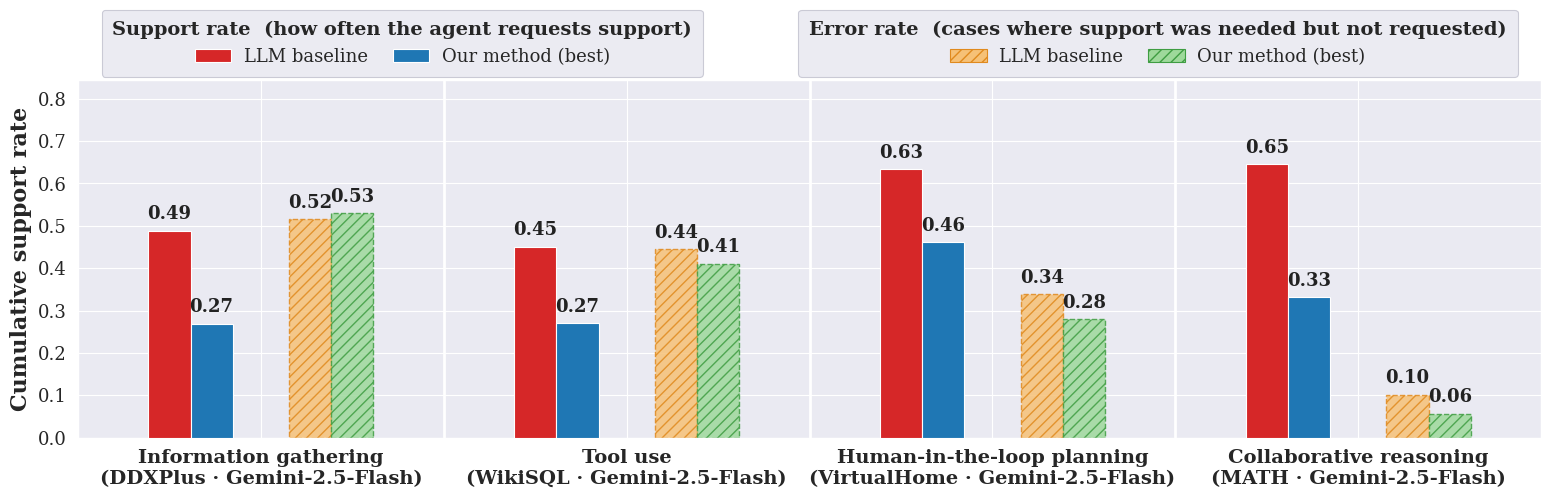}
    \caption{\textbf{Our method invokes decision support substantially less
often than an LLM-decides baseline, while matching its error rate.}
For each of four agentic applications: information gathering (DDXPlus),
tool use (WikiSQL), human-in-the-loop planning (VirtualHome), and
collaborative human--AI reasoning (MATH), all using Gemini-2.5-Flash, we
report two pairs of bars. The left pair (solid) shows the cumulative
\emph{support rate}: the fraction of inputs on which the agent requests
external support. The right pair (hatched) shows the cumulative
\emph{missed support error rate}: cases where support would have materially improved the output but was not
requested. In each pair, the left bar is the LLM baseline (the agent
deciding for itself when to ask for support) and the right bar is our
best learned method. Across all four applications, our
method invokes support far less often than the baseline while maintaining a comparable or lower error rate.}
    \label{fig:intro-results}
\end{figure}

\section{Related Works}
\label{sec:relatedworks}
We briefly discuss closely related works here and defer a more detailed dicussion to Appendix~\ref{app:extended-related-works}.

\textbf{Training agents to seek support.}
One body of work bakes support-seeking behavior into the agent
itself through training, both on the tool-use and retrieval side
\citep{schick2023toolformerlanguagemodelsteach,asai2023selfraglearningretrievegenerate,jiang2023activeretrievalaugmentedgeneration,jeong2024adaptiveraglearningadaptretrievalaugmented,han2024uncertaintyawarelanguageagent,qian2026scent}
and on the user-interaction side
\citep{li2023elicitinghumanpreferenceslanguage,zhang2025modelingfutureconversationturns,andukuri2024stargateteachinglanguagemodels,wang2026learningaskllmagents}.
These methods produce stronger agents: a model already knows when
to call SQL, or that has been fine-tuned to ask the right clarifying
questions, is precisely the kind of base policy our framework sits on
top of. Our experiments use frontier models that already incorporate substantial
training of this kind, and our gains reflect what is achievable as an
oversight layer on top of agents that are already strong at the
underlying support modalities.
\textbf{Inference-time policies for support-seeking.}
A closely related line of work designs inference-time policies that decide
whether an agent should pause to seek information before acting. Some
threshold the agent's own confidence in its
answer~\citep{kuhn2023clamselectiveclarificationambiguous,wang2026learningaskllmagents,ross2025when2callnottools};
others compute the expected utility of asking a clarification under an
explicit cost~\citep{dong2026valueinformationframeworkhumanagent};
and others use offline-calibrated prediction sets over candidate user intents
to trigger help-seeking~\citep{ren2023robotsaskhelpuncertainty}. A complementary line considers support-seeking for \emph{verification}
rather than generation \citep{kiyani2026trustcheapcheckweak}. We instead
provide a unifying framework that brings these approaches under the same
design principles, through an oversight layer with rigorous finite-sample
error control. This enables us to
handle a broad set of support modalities within a single algorithm,
operate fully online, and
control the counterfactual missed-support error at a user-chosen level which we test in our experiments across information gathering, tool
use, and human-AI collaboration.

\section{Fundamentals of Strategic Decision Support} \label{sec:fundamentals}

In this section, we introduce the central objectives of strategic decision support at the population level. We then characterize the optimal support-seeking strategy in this regime. These results form the foundation for the practical sequential algorithm developed in Section \ref{sec:alg}, where the population quantities are unknown and support decisions must be made online.

We begin by modeling the interaction between an AI agent and a decision-support mechanism.
Let $X \sim P_X$ denote all information available to the agent at decision time, including the user prompt, task description, and any other available input modalities. Given $X=x$, the agent can first produce an unsupported output
$
Y_0 \sim \pi(\cdot \mid x, \texttt{"no support"}),
$
corresponding to the response it would generate on its own. After observing this initial output, the agent must decide whether to settle for it or seek support.\footnote{One may avoid or only partially generate the unsupported output ($Y_0$) before deciding whether to seek support. Our framework and guarantees are robust to such choices; we discuss these further in Section~\ref{sec:alg} and Appendix~\ref{app:operational}.} A support-seeking strategy is therefore a function
$
a:\mathcal{X}\times\mathcal{Y}\to\{0,1\},
$
where $a(x,y_0)=0$ means that the agent acts on its own and keeps the unsupported output, while $a(x,y_0)=1$ means that it seeks support and produces a supported output
$
Y_1 \sim \pi(\cdot \mid x,y_0, \texttt{"with support"}).
$

For example, in \hyperref[ex:info-gathering]{Ex1}, $x$ may contain a patient's initial symptoms and description of their condition, while $y_0$ is the medical assistant's initial recommendation. The strategy then decides whether to proceed with this recommendation or seek additional clinical evidence before producing a supported response. This abstraction lets us formalize both the value and the cost of support.


\paragraph{Value of support.}
We begin by introducing a \emph{value indicator}
\[
g(X,Y_0,Y_1)\in\{0,1\}, \footnote{One may similarly define a continuous version of $g$ indicating how much value is added, and define the value of support as its expectation rather than the probability of being $1$.}
\]
which indicates whether, after comparing the unsupported outcome $Y_0$ and the supported outcome $Y_1$, support is judged to have been materially beneficial on that instance. Thus, $g=1$ means that support helped, while $g=0$ means that it did not. This notion is deliberately separate from the cost of seeking support: $g$ evaluates only whether the supported outcome is better, according to the provider's or downstream user's notion of performance. 


This indicator induces the central population quantity in our framework, the \emph{value of support},
\[
\operatorname{val}(x,y_0)
:=
\mathbb{P}\!\left(g(X,Y_0,Y_1)=1 \mid X=x,\,Y_0=y_0\right).
\]
 In words, $\operatorname{val}(x,y_0)$ is the probability that calling support would produce a substantially better outcome after observing the input instance and the agent's unsupported output.



The definition of $g$ depends on what it means for support to materially improve the output. In \hyperref[ex:info-gathering]{Ex1}, \hyperref[ex:hai-collab-reasoning]{Ex3}, and \hyperref[ex:tool-use]{Ex4}, where the output has a verifiable final answer, a natural choice is to set $g=1$ only when $Y_0$ is incorrect and $Y_1$ is correct, and to set $g=0$ in all other cases. In \hyperref[ex:human-in-the-loop]{EX2}, where the output is a structured plan, $g$ can instead indicate whether the supported output improves a task-specific quality metric beyond a chosen threshold. More generally, when the support mechanism is reliable, the simpler choice of $g=1$ whenever $Y_1 \neq Y_0$ is often a strong proxy, since a change in output is itself indicative of a material difference. We discuss precise implementations of $g$ in Section~\ref{subsec:tasks-baselines}.


\paragraph{Error of missed support.}
From the viewpoint of providers and downstream decision-makers, missing support when it would have helped is often the most consequential failure, since better outputs may directly translate into better downstream decisions. Thus our guiding principle is that support should be used whenever it would materially improve the output.
This leads to an asymmetric notion of error: what matters is how often the agent fails to seek support on the very instances where support would have helped. For a strategy $a$, we therefore define the \emph{missed-support error} as 
\[
\mathbb{P}\big(a(X, Y_0)=0 \mid g(X,Y_0,Y_1)=1\big).
\]
It measures the probability that the agent acts alone, conditioned on support being beneficial. 


\paragraph{Cost of support.}
Support, however, is not free. It may require additional latency, computation, API usage, or human effort. Thus, the trivial strategy that always seeks support is typically unacceptable, even though it drives the missed-support error to zero. The goal is therefore to control missed-support error while using support only when necessary. To capture this, we measure the cost of a strategy $a$ by its support rate
\[
\mathbb{E}\big[\mathbf{1}\{a(X, Y_0)=1\}\big],
\]
namely, the population probability that the agent seeks support. \footnote{We define cost as the frequency of support calls. In practice, however, the cost of support may depend on the instance. We defer these finer-grained formulations to future work.}

Putting these pieces together, we arrive at the following population-level formulation. We seek a strategy that minimizes how often support is used, while guaranteeing that the agent rarely skips support on the instances where support would have been beneficial:

\begin{tcolorbox}[
    colback=gray!5,
    colframe=black!45,
    coltitle=black,
    colbacktitle=gray!5,
    title=\textbf{Strategic Decision Support Optimization},
    fonttitle=\normalsize\bfseries,
    boxrule=0.5mm,
    arc=2mm,
    top=1mm, bottom=1mm,
    left=1.5mm, right=1.5mm
]
\begin{equation}
\tag{SDS-Opt}\label{eq:sds_opt}
\begin{alignedat}{2}
    \phantom{\text{subject to}}\quad
    &\underset{a:\,\mathcal{X}\times\mathcal{Y}\to\{0,1\}}{\text{minimize}}
    &\quad& \mathbb{E}_{X}\bigl[\mathbf{1}\{a(X, Y_0)=1\}\bigr] \\[1ex]
    &\text{subject to}
    && \mathbb{P}\bigl(a(X, Y_0)=0 \mid g(X,Y_0,Y_1)=1\bigr) \;\leq\; \varepsilon
\end{alignedat}
\end{equation}
\end{tcolorbox}

The parameter $\varepsilon \in [0,1]$ specifies the tolerated level of missed-support error. Smaller values of $\varepsilon$ require the strategy to seek support on a larger fraction of instances where support would help, and therefore lead to more frequent support calls. Hence, this formulation captures the balance: support should be used sparingly, but not at the expense of missing the instances where it is genuinely needed.

We now characterize the optimal solution of \eqref{eq:sds_opt}. 




\begin{theorem}
\label{thm:optimal-population}
There exists an optimal solution to (SDS-Opt) of the form
\[
a^\star(x,y_0)=\mathbf{1}\{\operatorname{val}(x,y_0)>\tau^\star\},
\]
with possible randomization on the boundary , i.e. when 
$\{\operatorname{val}(x,y_0)=\tau^\star\}$, if needed.
\end{theorem}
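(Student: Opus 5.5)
This is a Neyman--Pearson argument. We allow randomized strategies $a(x,y_0)\in[0,1]$, which is the natural relaxation and contains the boundary randomization in the statement. Write $Z=(X,Y_0)$, $v(Z)=\operatorname{val}(Z)$ and $p=\mathbb{P}(g=1)=\mathbb{E}[v(Z)]$. If $p=0$, the constraint is vacuous and $a\equiv 0$ is optimal; this has the claimed form with $\tau^\star=1$, or any $\tau^\star\ge 1$. So assume $p>0$.

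\textbf{Step 1: rewrite the constraint.} Condition on $Z$ and use the tower property. Since $a$ depends only on $Z$,
\[
\mathbb{P}\bigl(a(Z)=0,\ g=1\bigr)=\mathbb{E}\bigl[(1-a(Z))\,v(Z)\bigr].
\]
The constraint is therefore equivalent to $\mathbb{E}[a(Z)v(Z)]\ge (1-\varepsilon)p=:c$. The problem becomes: minimize $\mathbb{E}[a(Z)]$ subject to $\mathbb{E}[a(Z)v(Z)]\ge c$, over $a:\mathcal{Z}\to[0,1]$. This is a fractional-knapsack / Neyman--Pearson problem in which each instance has cost $1$ and benefit $v(z)$.

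\textbf{Step 2: construct the threshold rule.} Define $F(t)=\mathbb{E}[v(Z)\mathbf{1}\{v(Z)>t\}]$. It is non-increasing and right-continuous in $t$, with $F(1)=0$ and $F(t)=p\ge c$ for $t<0$. Let
\[
\tau^\star=\inf\{t: F(t)\le c\}.
\]
Then $F(\tau^\star)\le c\le F(\tau^\star-)=\mathbb{E}[v\mathbf{1}\{v\ge\tau^\star\}]$. Set
\[
a^\star=\mathbf{1}\{v>\tau^\star\}+\gamma\,\mathbf{1}\{v=\tau^\star\},
\]
choosing $\gamma\in[0,1]$ so that $\mathbb{E}[a^\star v]=c$ exactly.

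\begin{itemize}
\item If $\tau^\star>0$ and $\mathbb{P}(v=\tau^\star)>0$, take $\gamma=(c-F(\tau^\star))/(\tau^\star\,\mathbb{P}(v=\tau^\star))$.
\item Otherwise take $\gamma=0$, so no randomization is needed.
\end{itemize}

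The degenerate case $\tau^\star=0$ can occur only when $\varepsilon=0$. There $a^\star=\mathbf{1}\{v>0\}$ already attains $c=p$, and one may take $\gamma=0$.

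\textbf{Step 3: prove optimality with the Lagrangian inequality.} Take any feasible $a$. Pointwise we have
\[
(a^\star(z)-a(z))\,(v(z)-\tau^\star)\ge 0,
\]
because $a^\star=1$ where $v>\tau^\star$ and $a^\star=0$ where $v<\tau^\star$. Taking expectations gives
\[
\tau^\star\bigl(\mathbb{E}[a^\star]-\mathbb{E}[a]\bigr)\le \mathbb{E}[a^\star v]-\mathbb{E}[a v]\le c-c=0.
\]
Hence $\mathbb{E}[a^\star]\le\mathbb{E}[a]$ whenever $\tau^\star>0$.

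When $\tau^\star=0$, a direct argument works instead. Any feasible $a$ must have $a=1$ almost surely on $\{v>0\}$, since otherwise $\mathbb{E}[av]<p$. Therefore $\mathbb{E}[a]\ge\mathbb{P}(v>0)=\mathbb{E}[a^\star]$.

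\textbf{Main obstacle.} The delicate part is Step 2. We must show that an exact-feasibility threshold $\tau^\star$ exists and handle atoms of the distribution of $v$, which is what forces the boundary randomization. Right-continuity of $F$ and the left-limit identity $F(\tau^\star-)=\mathbb{E}[v\mathbf{1}\{v\ge\tau^\star\}]$ are what make the interpolation via $\gamma$ possible. The rest is the standard Neyman--Pearson exchange argument. Measurability of $v$ as a version of the conditional probability is assumed throughout.
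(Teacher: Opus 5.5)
Your proof is correct and follows essentially the same route as the paper's. Both proofs rewrite the constraint as $\mathbb{E}[a\,\operatorname{val}]\ge(1-\varepsilon)\mathbb{P}(g=1)$, choose $\tau^\star$ with boundary randomization so the constraint binds, conclude via the pointwise exchange inequality $(a^\star-a)(\operatorname{val}-\tau^\star)\ge 0$, and handle $\tau^\star=0$ (i.e.\ $\varepsilon=0$) separately; your right-continuity argument for the existence of $\tau^\star$ justifies a step the paper simply asserts. One small fix: take the infimum over $t\ge 0$ only, because when $\varepsilon=0$ you have $F(t)=p=c$ for all $t<0$, so $\tau^\star$ as you wrote it is $-\infty$ rather than $0$.
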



Theorem~\ref{thm:optimal-population} identifies $\operatorname{val}(x,y_0)$ as the fundamental quantity for support seeking. The optimal strategy thresholds this quantity, i.e., support should be sought when its value is ``high enough.''

This characterization forms the foundation of our online method. Since $\operatorname{val}(x,y_0)$ is generally unknown in practice, the core task becomes to approximate it and threshold it adaptively. The algorithm in the next section does exactly this: it uses a score function as a proxy for the value of support, refines this score through calibration-on-the-fly, and updates a decision threshold online to determine when support should be sought.


\section{Online Algorithm and Guarantees} \label{sec:alg}

In this section, we design \emph{Strategic Oversight for Support-seeking} (SOS), an online algorithm for deciding when an AI agent should seek support. The population result of Section~\ref{sec:fundamentals} suggests a simple principle: estimate the value of support and seek support when this value is high enough. SOS turns this principle into a rigorous online procedure, where the value of support is estimated and calibrated online, and support decisions are made based on a threshold that is debiased sequentially.

We fix an AI agent and a support mechanism, and make no distributional assumptions on the data, the agent's behavior, or the behavior of the support mechanism. Building on the structural result of Section~\ref{sec:fundamentals}, SOS uses a score function
\[
s_\theta:\mathcal{X}\times\mathcal{Y}\to[0,1],
\]
where $s_\theta(x,y)$ evaluates an input $x$ together with a candidate output $y$. The score is intended to approximate the value of support, $\operatorname{val}(x,y)$, from Theorem~\ref{thm:optimal-population}. Here, $\theta$ denotes the parameters of the score function; these parameters may be fixed in advance, pretrained, or learned online as feedback is collected. Concrete choices and parameterizations of the score function are discussed in Section~\ref{sec:exp}.

\begin{tcolorbox}[colback=gray!5,colframe=black!80,boxrule=0.5pt,arc=2pt,left=4pt,right=4pt,top=4pt,bottom=4pt]
\textbf{Online interaction pipeline.}
At each round $t=1,2,\dots$:
\begin{enumerate}[leftmargin=5mm,itemsep=1pt,topsep=2pt]
    \item The agent receives an input $x_t$, produces an unsupported output $y_0^t$, and computes a score
    \[
    s_t := s_{\theta_t}(x_t, y_0^t).
    \]
    \item Based on $s_t$, the policy chooses an action $a_t\in\{0,1\}$, where $a_t=0$ means proceeding without seeking support, and $a_t=1$ means seeking support.
    \item If $a_t=0$, the agent finalizes the unsupported outcome $y_0^t$ and proceeds to the next round.
\item If $a_t=1$, the agent seeks support, produces the supported outcome $y_1^t$, and computes
\[
g_t := g(x_t,y_0^t,y_1^t)\in\{0,1\},
\]
which indicates whether support was beneficial on round $t$.
\end{enumerate}
\end{tcolorbox}

Our goal is to control the empirical missed-support error at a user-specified level $\alpha\in(0,1)$:
\footnote{We adopt the convention $\widehat{\mathrm{MSE}}(T)=0$ when $\sum_{t=1}^T g_t=0$.}
\begin{equation}
\label{eq:mse}
\widehat{\mathrm{MSE}}(T)
:=
\frac{\sum_{t=1}^T g_t(1-a_t)}{\sum_{t=1}^T g_t}
\le \alpha .
\end{equation}
This error is the fraction of beneficial-support rounds on which the policy nevertheless acts alone. Two remarks are in order.

\begin{itemize}[leftmargin=5mm,itemsep=2pt,topsep=2pt]
    \item We modeled the online interaction in a form where the support decision is made using both the input $x_t$ and the unsupported output $y_0^t$. In practice, one may choose to avoid generating, or only partially generate, the unsupported outcome before deciding whether to seek support. On rounds where support is sought, one may likewise be able to compute the value of $g$ without generating all of $y_0^t$. Our framework is easily tuned to such operational choices. We discuss these practical variants and their implications for cost and performance in more detail in Appendix~\ref{app:operational}.

    \item The error we seek to control depends on $g_t$, but $g_t$ is only revealed on rounds where support is sought, since computing it requires comparing $y_0^t$ and $y_1^t$. Thus, the relevant error is fundamentally counterfactual and only selectively observed. This partial-feedback structure is exactly what makes randomization necessary in the online algorithm.
\end{itemize}

\textbf{Algorithm.}
At each round $t$, we maintain a threshold $\lambda_t$ and define the support-seeking probability
\[
p_t :=
\begin{cases}
1, & s_t \ge \lambda_t,\\
\mu, & s_t < \lambda_t,
\end{cases}
\]
where $\mu\in(0,1)$ is a fixed exploration parameter. The action is then sampled as
$
a_t \sim \mathrm{Bernoulli}(p_t).
$

Thus, when the score exceeds the current threshold, the algorithm always seeks support; when the score falls below the threshold, it still seeks support with a small probability $\mu$ in order to obtain feedback about whether support would have been helpful.


On rounds where support is sought, we observe $g_t$, and the threshold is updated as
\[
\lambda_{t+1}
\leftarrow
\lambda_t
-
\eta_t\cdot
\frac{g_t a_t}{p_t}
\Big[(1-p_t)\mathbf{1}\{s_t<\lambda_t\}-\alpha\Big],
\]
where $\eta_t>0$ is a step size and $\alpha\in(0,1)$ is the target missed-support error level.

This update resembles online quantile-tracking: the quantity inside the brackets plays the role of an error-minus-target signal. In a deterministic threshold policy, this signal would be $\mathbf{1}\{s_t<\lambda_t\}-\alpha$, since rounds below the threshold are exactly those on which the agent proceeds without support. Here, however, the action is randomized: even when $s_t<\lambda_t$, support is still sought with probability $p_t=\mu$. Thus the realized missed-support event is not simply $\mathbf{1}\{s_t<\lambda_t\}$, but rather $(1-a_t)\mathbf{1}\{s_t<\lambda_t\}$. Since the only randomness comes from the algorithm’s internal exploration, we have
\[
\mathbb{E}\!\left[(1-a_t)\mathbf{1}\{s_t<\lambda_t\}\mid \mathcal{F}_{t-1},x_t,s_t\right]
=
(1-p_t)\mathbf{1}\{s_t<\lambda_t\},
\]
where the expectation is taken only over the Bernoulli draw $a_t\sim \mathrm{Bernoulli}(p_t)$. This explains the extra factor $(1-p_t)$ in the update. The prefactor $g_t a_t/p_t$ then serves as an importance-weighted correction for the fact that $g_t$ is observed only on rounds where support is sought.

\begin{algorithm}[t]
\caption{Strategic Oversight for Support-seeking (SOS)}
\label{alg:ads}
\footnotesize
\begin{algorithmic}
\REQUIRE Target $\alpha\in(0,1)$; exploration parameter $\mu\in(0,1)$; step sizes $\{\eta_t\}_{t\ge1}, \{\gamma_t\}_{t\ge1}$; initial threshold $\lambda_1$; initial score parameter $\theta_1$

\FOR{$t=1,2,\dots$}
    \STATE Receive input $x_t$ and generate $y_0^t$
    \STATE Compute score $s_t=s_{\theta_t}(x_t, y_0^t)\in[0,1]$

    \STATE \hrulefill\ \textit{Support-seeking probability and action}\ \hrulefill

    \STATE $p_t \gets \mu + (1-\mu)\,\mathbf{1}\{s_t \ge \lambda_t\}$
    \STATE Sample $a_t \sim \mathrm{Bernoulli}(p_t)$

    \STATE \hrulefill\ \textit{Decision and feedback}\ \hrulefill

    \IF{$a_t=0$}
        \STATE $\lambda_{t+1}\gets\lambda_t$, $\theta_{t+1}\gets\theta_t$
    \ELSE
        \STATE Seek support and compute $y_1^t$
        \STATE Compute $g_t = g(x_t,y_0^t,y_1^t)\in\{0,1\}$

        \STATE \hrulefill\ \textit{Threshold update and calibration-on-the-fly}\ \hrulefill

        \STATE $\lambda_{t+1} \gets \lambda_t
        - \eta_t \,
        \dfrac{g_t}{p_t}
        \Big((1-p_t)\mathbf{1}\{s_t<\lambda_t\}-\alpha\Big)$

        \STATE $\theta_{t+1} \gets \theta_t - \gamma_t \dfrac{a_t}{p_t}\nabla_\theta\!\left(s_\theta(x_t, y_0^t)-g_t\right)^2\Big|_{\theta=\theta_t}$
    \ENDIF
\ENDFOR
\end{algorithmic}
\end{algorithm}

\textbf{Calibration-on-the-fly:} Alongside the threshold update, we also allow the score itself to be updated online. Specifically, we treat the score as parameterized by $\theta_t$ and update
\[
\theta_{t+1}
\leftarrow
\theta_t
-
\gamma_t\frac{a_t}{p_t}\nabla_\theta\!\left(s_\theta(x_t)-g_t\right)^2\Big|_{\theta=\theta_t},
\]
where $\gamma_t>0$ is a step size. The factor $a_t/p_t$ again provides the appropriate importance-weighted correction, since $g_t$ is observed only on rounds where support is sought. At this level, we view this as an abstract online calibration procedure for improving how well the score tracks the latent value of support; concrete score parameterizations and design choices will be discussed in the next section.

The next result shows that the threshold update rule in Algorithm~1 yields a distribution-free finite-sample guarantee for controlling the empirical missed-support error.

\begin{theorem}\label{thm:finite-sample-control}
Run Algorithm~\ref{alg:ads} with constant step size $\eta_t\equiv\eta>0$,
exploration parameter $\mu\in(0,1)$, target level $\alpha\in(0,1-\mu)$,
and initial threshold $\lambda_1\in[0,1]$. Fix any horizon $T\ge 1$, and define
$
N_g(T):=\sum_{t=1}^T g_t .
$
Then for any $\delta\in(0,1)$, w.p. at least $1-\delta$ over the
algorithm's randomness,
\[
\widehat{\mathrm{MSE}}(T)
\le
\alpha+\Delta(N_g(T),\delta),
\]
where, for $N\ge 1$,
$
\Delta(N,\delta)
:=
\frac{1+2\eta/\mu}{\eta N}
+
\sqrt{\frac{8\log(4/\delta)}{\mu N}}
+
\frac{4\log(4/\delta)}{3\mu N},
$
and $\Delta(0,\delta):=0$.
\end{theorem}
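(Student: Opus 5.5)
We will follow the standard online quantile-tracking argument: telescope the threshold update, use boundedness of $\lambda_t$ to control the accumulated update terms, and then use martingale concentration to replace the importance-weighted surrogate by the realized missed-support count.

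\textbf{Step 1 (boundedness of the iterates).} We will show that $\lambda_t$ stays in a bounded interval, say $[-2\eta/\mu,\,1+2\eta/\mu]$ or similar. Scores lie in $[0,1]$. If $\lambda_t>1$, then $s_t<\lambda_t$ always holds, so $p_t=\mu$ and the update on a $g_t=1$ round is $-\eta((1-\mu)-\alpha)/\mu<0$, using $\alpha<1-\mu$. Hence $\lambda_t$ decreases once it exceeds $1$, and a single step moves it by at most $\eta/\mu$. If $\lambda_t\le 0$, then $s_t\ge\lambda_t$, so $p_t=1$ and the update is $+\eta\alpha>0$. Thus $\lambda_t$ increases below $0$. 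This confines $\lambda_t$ to an interval of length at most $1+2\eta/\mu$, which gives the first term of $\Delta$.

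\textbf{Step 2 (telescoping).} Define
\[
Z_t:=\frac{g_t a_t}{p_t}\Big((1-p_t)\mathbf 1\{s_t<\lambda_t\}-\alpha\Big).
\]
Summing the update gives $\sum_{t\le T} Z_t=(\lambda_1-\lambda_{T+1})/\eta\le(1+2\eta/\mu)/\eta$.

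\textbf{Step 3 (martingale comparison).} The target quantity is
\[
\sum_t g_t(1-a_t)-\alpha\sum_t g_t=\sum_t g_t\big((1-a_t)\mathbf 1\{s_t<\lambda_t\}-\alpha\big),
\]
since $a_t=1$ whenever $s_t\ge\lambda_t$. Conditional on the past and on $(x_t,s_t)$, with $g_t$ treated as a fixed counterfactual value, both this increment and $Z_t$ have the same conditional mean $g_t\big((1-p_t)\mathbf 1\{s_t<\lambda_t\}-\alpha\big)$. Their difference is therefore a martingale difference sequence with respect to the algorithm's randomness. Each increment is bounded by $O(1/\mu)$, and is nonzero only on rounds with $g_t=1$. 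Its conditional variance is at most $g_t/\mu$: the variance of $g_ta_t/p_t$ is $g_t(1-p_t)/p_t\le g_t/\mu$, and the other term contributes similarly.

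We will apply Freedman's (Bernstein-type) inequality with predictable variance bounded by $N_g(T)/\mu$ and increments bounded by $1/\mu$. This is applied to each of the two martingale pieces (the $Z_t$ deviation and the $(1-a_t)$ deviation), or to their combination, each at level $\delta/2$. The result is a deviation of
\[
\sqrt{8\log(4/\delta)\,N_g/\mu}+\tfrac{4\log(4/\delta)}{3\mu}.
\]
Dividing by $N_g(T)$ gives the remaining terms of $\Delta$. The case $N_g=0$ is trivial.

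\textbf{Main obstacle.} The hardest step is that $N_g(T)$ is random and the variance proxy is data-dependent. A plain Freedman bound requires a fixed variance level. We will handle this with a stitching or union bound over the values of $N_g$ (which would cost a log factor), or with a variance-adaptive version of Freedman. Alternatively, we will treat the $g_t$ sequence as oblivious, with the adversary fixed in advance, and conditionally apply Bernstein with $N_g$ deterministic. Since the constants in $\Delta$ contain no such log factor, the intended argument is presumably the latter, conditioning on the counterfactual $g$ sequence. We will adopt that view and state it explicitly.
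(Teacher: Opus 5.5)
Your proposal is correct and follows essentially the same route as the paper: confine $\lambda_t$ to $[-\eta/\mu,\,1+\eta/\mu]$ using $\alpha<1-\mu$, telescope the importance-weighted update, and apply Freedman at level $\delta/2$ to each of the two martingale pieces before a union bound (the paper routes this through the intermediate quantity $\overline{\mathrm{MSE}}(T)=\frac1N\sum_t g_t(1-p_t)\mathbf 1\{s_t<\lambda_t\}$). The paper also adopts exactly the oblivious view you settle on: it treats $g_t$ as a fixed potential outcome and takes all probabilities over the Bernoulli draws only, so $N_g(T)$ is deterministic in the Freedman step.
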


Theorem~4.1 shows that the empirical missed-support error is controlled at the target level $\alpha$ up to a finite-sample term. The bound has two qualitatively different sources. The first term, of order $1/(\eta N)$, is the intrinsic error of online quantile tracking and would remain even if the value of support signal were always revealed\citep{gibbs2021adaptiveconformalinferencedistribution, angelopoulos2023conformalpidcontroltime, ramalingam2025relationship}. The remaining terms arise from the randomized exploration needed to obtain unbiased feedback. The dependence on $\mu$ makes the tradeoff explicit: larger exploration improves error control, but increases support usage. Finally, the validity guarantee comes from the threshold update rule and does not rely on calibration-on-the-fly, so it holds whether the score parameters are fixed or getting updated. The role of calibration-on-the-fly is instead to improve the score over time and reduce unnecessary support calls, as we demonstrate empirically.

\section{Experiments}
\label{sec:exp}

This section is organized as follows. Section~\ref{subsec:score-design} introduces three families of score designs. Section~\ref{subsec:tasks-baselines} describes the four tasks, datasets, and baseline methods. Section~\ref{subsec:error-control} empirically validates Theorem~\ref{thm:finite-sample-control}, showing that the algorithm controls the missed-support error at the target level $\alpha$ across all score families. Section~\ref{subsec:dss-efficiency} then fixes the missed-support error across methods and compares how often each invokes support. At matched error, the method calling support less often better identifies when support is needed, and the gap between methods reflects the value of richer score signals.
\subsection{Score Design}
\label{subsec:score-design}
 Algorithm~\ref{alg:ads} is agnostic to how the score $s_\theta(x)$ is constructed. Since the threshold update controls the missed-support error at level $\alpha$ regardless of the score's quality (Theorem~\ref{thm:finite-sample-control}), the score's role is to reduce how often support is called by separating instances where support would be beneficial from those on which it would not. We evaluate three score families, organized in increasing order of expressivity. 

\begin{itemize}[leftmargin=*, itemsep=4pt, topsep=2pt]
\item \textbf{Confidence score.} The simplest family takes the score directly from the black-box signal, with no learnable parameters or calibration-on-the-fly: $s(x) := \hat{g}_{\text{bb}}(x)$. The signal is thresholded as-is, and only $\lambda$ in Algorithm~\ref{alg:ads} is adapted online. This natural training-free baseline serves as a reference for parameterized scores, and reappears as the anchor in the anchored family below.

\item \textbf{Representation score.} The second family parameterizes the score as a linear function of a frozen embedding, $s_\theta(x) := \sigma(\theta^\top \phi(x))$. $\phi(x)\in\mathbb{R}^d$ embeds the input, $\sigma(\cdot)$ is the sigmoid that keeps the score in $[0,1]$, and only the linear coefficients $\theta\in\mathbb{R}^d$ are updated by calibration-on-the-fly. This is a linear probe over a representation that summarizes the input without committing to a particular notion of confidence; calibration-on-the-fly learns which directions in representation space predict whether support helps. The embedding $\phi$ can be any pretrained text representation. In the \emph{black-box} setting,  $\phi$ comes from a separate frozen encoder applied to the input. In the \emph{white-box} setting, we can also use the LLM's hidden states. The input to $\phi$ is itself a modeling choice; we use the prompt $x$ throughout and study alternatives (such as also including $y_0$ when generated) in Appendix~\ref{app:operational}.

\item \textbf{Anchored score.} The final family combines the parameterized linear term with the black-box signal in logit space: $s_\theta(x) := \sigma\bigl(\mathrm{logit}(\hat{g}_{\text{bb}}(x)) + \theta^\top \phi(x)\bigr)$. Here $\hat{g}_{\text{bb}}$ acts as an \emph{anchor} providing an initial estimate of the value of support, and $\theta^\top \phi(x)$ learns a residual correction in logit space. When the anchor is already well-aligned with $g$, the anchored score inherits its quality and calibration-on-the-fly need only learn small adjustments. When the anchor is uninformative or systematically biased, the linear term can override it. 
\end{itemize}

The Representation and Anchored families are each instantiated with three embedding choices, summarized below. Black-box (BB) variants apply a separate frozen encoder to the input; the white-box (WB) variant uses the LLM's own hidden state at the final input
token.

\begin{tcolorbox}[
    colback=gray!5,
    colframe=black!45,
    boxrule=0.5mm,
    arc=2mm,
    top=1mm, bottom=1mm,
    left=1.5mm, right=1.5mm
]
\renewcommand{\arraystretch}{1.15}
\setlength{\tabcolsep}{4pt}
\footnotesize
\begin{tabularx}{\linewidth}{@{}l c c X@{}}
\textbf{Variant} & \textbf{BB} & \textbf{WB} & \textbf{Embedding model} \\
\midrule
\textsc{Confidence}   & \checkmark & \checkmark & --- (no embedding; $s(x) = \hat{g}_{\text{bb}}(x)$) \\
\midrule
\textsc{Rep-MiniLM}, \textsc{Anc-MiniLM} & \checkmark &            & \texttt{all-MiniLM-L6-v2} sentence encoder \\
\textsc{Rep-Gemini}, \textsc{Anc-Gemini} & \checkmark &            & \texttt{gemini-embedding-2} text encoder \\
\textsc{Rep-Hidden}, \textsc{Anc-Hidden} &            & \checkmark & LLM's hidden state at the final input token \\
\end{tabularx}
\end{tcolorbox}

\subsection{Tasks and Baselines}
\label{subsec:tasks-baselines}
\paragraph{Tasks.} We instantiate each of the application categories from Section~\ref{sec:intro} on a concrete dataset. Task specifications and prompts are deferred to Appendix~\ref{app:prompts}.

\textbf{Information gathering} on \textbf{DDXPlus}~\citep{tchango2022ddxplusnewdatasetautomatic}(medical diagnosis): A diagnostic agent receives a patient's chief complaint and initial symptoms as $x$; support reveals follow-up questions, examination findings, and laboratory results. We set $g=1$ when $y_0$ is an incorrect diagnosis and $y_1$ is correct against the ground-truth pathology. 

\textbf{Tool use} on \textbf{WikiSQL}~\citep{zhong2017seq2sqlgeneratingstructuredqueries}: An agent receives a natural-language question about a table as $x$; support consists of formulating and executing SQL queries against it. We set $g=1$ when $y_0$ is incorrect and $y_1$ matches the dataset's expected answer. 

\textbf{Human-in-the-loop planning} on \textbf{VirtualHome}\citep{puig2018virtualhomesimulatinghouseholdactivities}: A household robot receives a task description as $x$ and  must output a plan as a sequence of actions; support reveals scene-specific object locations and constraints from the resident. $g=1$ when the longest-common-subsequence (LCS) overlap of $y_1$ with the gold action sequence exceeds that of $y_0$. 

\textbf{Human-AI collaborative reasoning} on Level 4--5 problems from \textbf{MATH} \citep{hendrycks2021measuringmathematicalproblemsolving}: An reasoning agent attempts solving math problems; support comes from a stronger reasoner that provides targeted guidance on a step the agent identifies as uncertain, without revealing the full solution. We set $g=1$ when $y_0$ is incorrect and $y_1$ is correct.

We run all four tasks with three frontier base agents spanning both regimes: Qwen-2.5-7B~\cite{qwen2025qwen25technicalreport} (white-box), Gemini-2.5-Flash~\cite{comanici2025gemini25pushingfrontier} (black-box), and GPT-4o-mini~\cite{openai2024gpt4ocard} (black-box).

\textbf{Baselines. } Our primary baseline is \textbf{LLM-decides}, which lets the LLM itself choose whether to seek support after producing $y_0$. This is a natural reference point as it isolates what the agent does on its own, with no oversight layer, no error target, and no calibration. Its missed-support rate is a property of the model and the task, and cannot be controlled. We take this rate as the target $\alpha$ for our algorithm on each (task, model) pair, so that all comparisons are on equal footing. We then run each score variant from Section~\ref{subsec:score-design} inside Algorithm~\ref{alg:ads}: \textsc{CONFIDENCE} adapts only $\lambda_t$, while the Representation and Anchored variants additionally update the score parameters $\theta$ via calibration-on-the-fly. 

\subsection{Error Control}
\label{subsec:error-control}

\begin{figure}[htp]
    \centering
    \includegraphics[width=\linewidth]{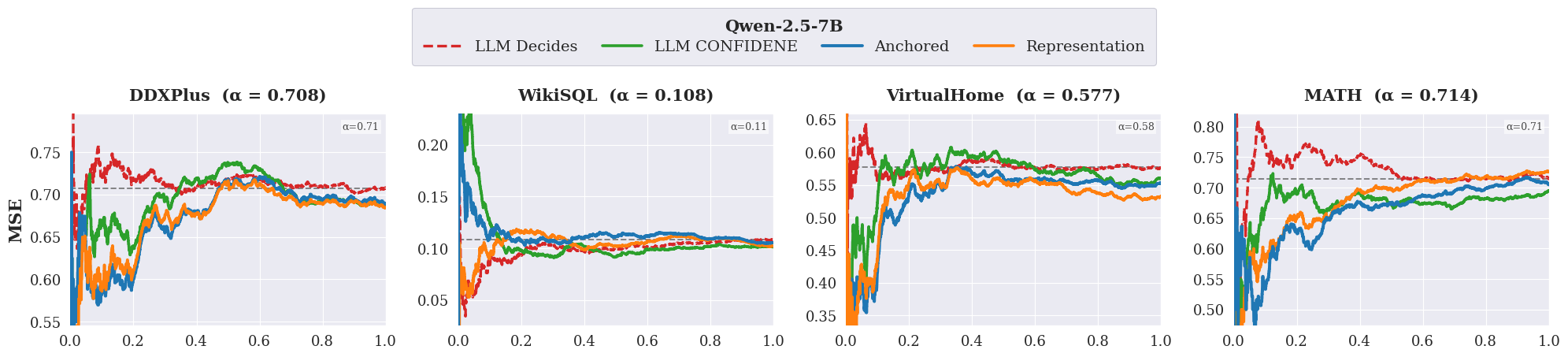}
    \caption{Cumulative missed-support error on all four tasks with Qwen-2.5-7B as the agent.}
    \label{fig:error-control}
\end{figure}
We track the cumulative missed-support error $\mathrm{MSE}(T)$ (Equation~\eqref{eq:mse}), the empirical fraction of rounds where support would have helped but was not invoked.
Figure~\ref{fig:error-control} shows the running $\mathrm{MSE}(T)$ for one representative model across all four tasks with the $x$-axis indicating progress through the interaction stream as a fraction of total rounds. All score variants converge to the target $\alpha$ (the rate achieved by LLM-Decides), with trajectories following the typical adaptation profile of online quantile tracking. Convergence rate is governed by the threshold step size $\eta$, which trades off adaptation speed against stability. The same behavior hold across all base agents. The full set is reported in Appendix~\ref{app:error-control-all}.



\subsection{Support Efficiency}
\label{subsec:dss-efficiency}

Error control alone does not imply an efficient support policy. A policy that always invokes support has zero missed-support error but is maximally wasteful. We therefore compare methods on equal footing by asking how often each invokes support at the same missed-support error level. Figure~\ref{fig:dss-main} reports the cumulative support rate across all task--model pairs, with the target $\alpha$ for SOS set to the missed-support rate achieved by \textsc{LLM-Decides} on each pair. Two general patterns emerge.


\begin{figure}[htp]
    \centering
    \includegraphics[width=\linewidth]{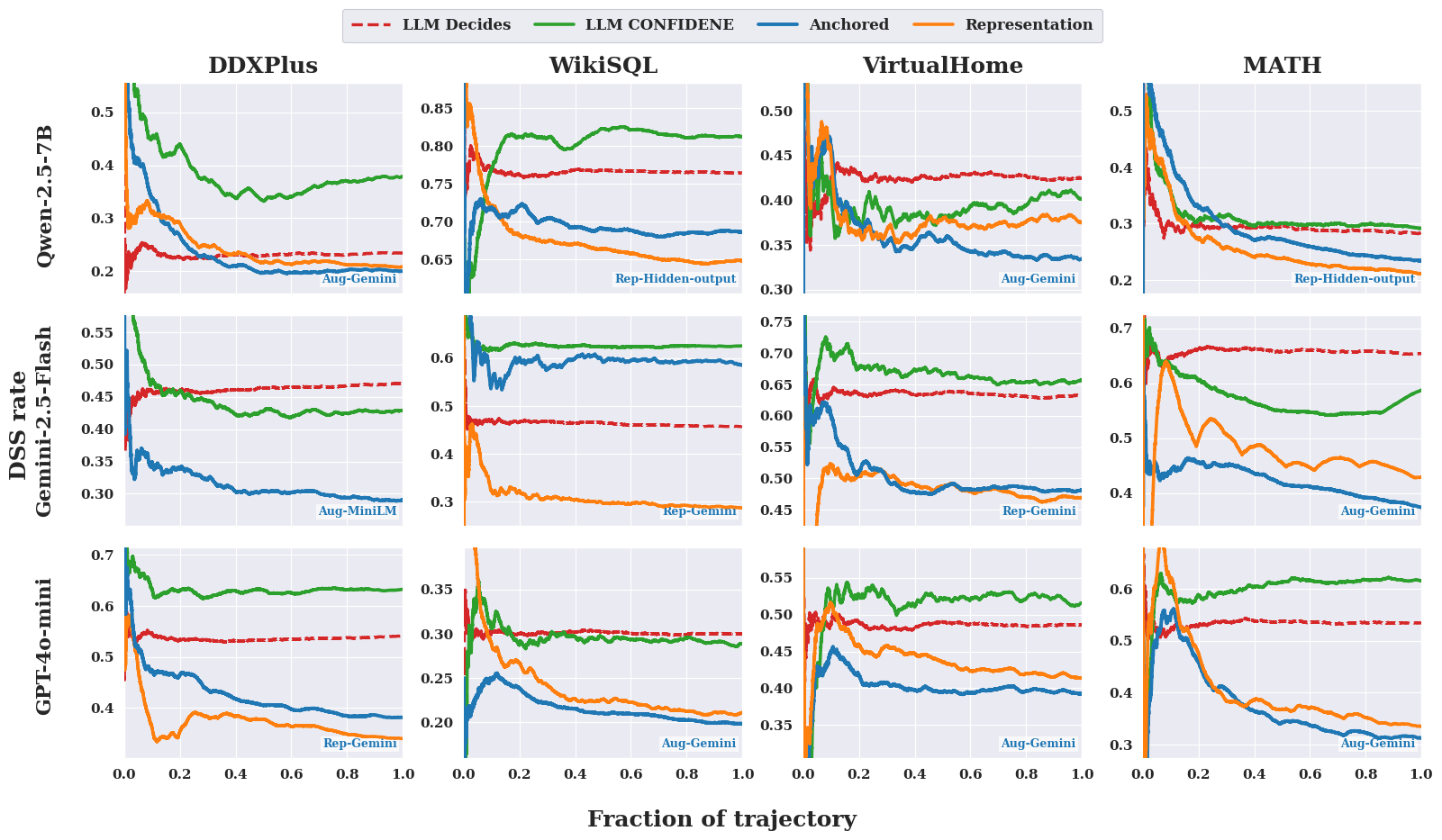}
    \caption{Cumulative support rate $\widehat{\mathrm{SR}}_T = \frac{1}{T}\sum_{t=1}^{T} a_t$ across all task--model pairs at matched missed-support error. We show the best-performing variant across both families, paired with its same-embedding counterpart from the other family. Full per-panel comparisons in Appendix~\ref{app:dss-all-variants}.}
    \label{fig:dss-main}
\end{figure}
\textbf{Calibration-on-the-fly recovers from uninformative signals.} The Representation family reliably reduces the support rate relative to \textsc{LLM-decides} on all task--model pairs, and the Anchored family does so on 11 of 12, regardless of how informative the underlying signal is. The oversight layer does not require a strong starting signal, since calibration-on-the-fly improves it from feedback on rounds where support is invoked. The Confidence score is unstable across tasks and agents; sometimes below \textsc{LLM-decides} (e.g., VirtualHome on Qwen-2.5-7B), sometimes well above (e.g., WikiSQL on Qwen-2.5-7B). The parameterized families absorb this variability and reach lower support rates.

\textbf{Anchoring on a reliable initial signal accelerates the gain.} The two parameterized families differ in whether they treat the initial signal as a useful prior or discard it. When the anchor is informative, calibration-on-the-fly only needs to learn a small correction, and the Anchored variants outperform their Representation counterparts (e.g., MATH on Gemini-2.5-Flash). When the anchor is uninformative or actively misleading, the Representation family pulls ahead, since the residual term in the Anchored score has to first undo the bad anchor before it can learn (e.g., WikiSQL on Gemini-2.5-Flash). In our experiments the anchor is the LLM's own self-confidence, but the same principle applies to any auxiliary signal. 

\begin{framed}
\noindent \textbf{Takeaway.} AI agents struggle to determine when seeking support is beneficial, even when they are proficient at utilizing that support once provided. Our strategic oversight algorithm significantly outperforms an agent’s self-decisions, achieving lower support rates while maintaining formal error control. Among score designs, parameterized scores that learn from representations perform reliably well, with calibration-on-the-fly effectively reducing the support rate as the query stream progresses.



\end{framed}


\section{Limitations and Future Work}
We focus on a binary support-seeking decision and measure cost by the frequency of support calls. In practice, agents may have multiple support options, and the cost of support may vary across options and instances. One may also replace the binary value indicator $g$ with a continuous measure of how much value support adds, defining the value of support as its expectation rather than the probability that $g=1$. We defer these finer-grained formulations to future work.

\section{Extended Related Works}
\label{app:extended-related-works}

\paragraph{Training Agents to Seek Support.}
A significant body of work bakes support-seeking behavior into the agent
itself through training, modifying the agent's weights or generation
procedure so that it learns when to invoke external resources or ask
clarifying questions. On the tool-use and retrieval side, Toolformer~\citep{schick2023toolformerlanguagemodelsteach}
self-supervisedly teaches a language model where to insert API calls by
sampling and filtering candidate calls based on whether they reduce
future-token loss. Self-RAG~\citep{asai2023selfraglearningretrievegenerate}
trains a model to interleave retrieval with generation through reflection
tokens that signal when to retrieve and when to commit. Confidence-triggered
retrieval methods take a complementary route, deciding to retrieve based on
signals during decoding: FLARE~\citep{jiang2023activeretrievalaugmentedgeneration}
triggers retrieval when the predicted next sentence contains low-confidence
tokens, Adaptive-RAG~\citep{jeong2024adaptiveraglearningadaptretrievalaugmented}
trains a small classifier to route queries among no-retrieval, single-step,
and iterative-retrieval strategies, UALA~\citep{han2024uncertaintyawarelanguageagent}
sets an uncertainty threshold on agent answers and resorts to external
resources when exceeded, and SCENT~\citep{qian2026scent} formalizes
adaptive retrieval as reinforcement learning with dense rewards on
intermediate retrieval quality. On the user-interaction side, GATE~\citep{li2023elicitinghumanpreferenceslanguage}
introduced the framework of having an LLM elicit user preferences through
targeted questions, with subsequent work training the questioner via
self-improvement (STaR-GATE~\citep{andukuri2024stargateteachinglanguagemodels}),
DPO-style preference optimization at the conversational-action
level~\citep{zhang2025modelingfutureconversationturns}, and benchmarks that
explicitly evaluate the ask-or-tool-call decision~\citep{ross2025when2callnottools}.

We instead provide an oversight layer around an unmodified agent, which
decides whether to invoke support without changing how the agent itself
generates outputs. This separation makes the two lines complementary
rather than competing: any model that has already been trained to call
tools, retrieve, or ask clarifying questions is precisely the kind of
base policy our framework wraps. This enables us to deliver finite-sample
error control on top of strong, already-trained agents without
retraining, fine-tuning, or modifying generation, and to do so uniformly
across support modalities. 
Our experiments use state-of-the-art LLMs (Qwen-2.5-7B,
Gemini-2.5-Flash, GPT-4o-mini) that already incorporate substantial
training of this kind, and our gains are reported on top of these strong
agents.

\paragraph{Allocating Tasks Across Decision-Making Systems.}
A second body of work studies how to allocate tasks across multiple
decision-making systems rather than how to support a single one. Within
this body, four directions are most relevant. LLM cascades and routers
send each query to the cheapest system whose output is good enough:
FrugalGPT~\citep{chen2023frugalgptuselargelanguage} composes LLM APIs
into a cascade with learned stop criteria, RouteLLM~\citep{ong2025routellmlearningroutellms}
trains a router that selects between strong and weak LLMs using human
preference data, and cost-saving cascades with early
abstention~\citep{zellinger2025costsavingllmcascadesearly} extend cascades
to allow abstention at intermediate tiers. 

Learning to defer~\citep{madras2018predictresponsiblyimprovingfairness}
generalizes rejection learning by routing uncertain instances to an
expert whose decision replaces the model's, with subsequent work
developing Bayes-consistent surrogate
losses~\citep{mozannar2021consistentestimatorslearningdefer},
calibrated one-vs-all
approaches~\citep{verma2022calibratedlearningdeferonevsall},
extensions to multiple
experts~\citep{verma2023learningdefermultipleexperts,NEURIPS2023_0b17d256,hemmer2023learningdeferlimitedexpert,keswani2021unbiasedaccuratedeferralmultiple,wilder-2021-learing-to-complement-humans},
LLM allocation~\citep{montreuil2025optimalqueryallocationextractive},
algorithmic-triage formulations~\citep{raghu2019algorithmicautomationproblemprediction,okati2021differentiablelearningtriage,de2021classificationhumanassistance,de2021regressionhumanassistance},
training-free conformal deferral~\citep{charusaie2022sampleefficientlearningpredictors},
and collaborative matching for selective
deferral~\citep{arnaizrodriguez2025humanaicomplementaritymatchingtasks}.
Selective prediction lets the model abstain entirely on uncertain inputs, trading coverage for accuracy~\citep{JMLR:v11:el-yaniv10a,geifman2017selectiveclassificationdeepneural,geifman2019selectivenetdeepneuralnetwork}.
A more recent line uses conformal calibration to deliver formal guarantees
on these decisions: Conformal Arbitrage~\citep{overman2025conformalarbitrageriskcontrolledbalancing}
calibrates a threshold mediating between a primary model and a more
conservative guardian, and Calibrate-Then-Delegate (CTD)~\citep{pona2026calibratethendelegatesafetymonitoringrisk}
introduces a delegation-value probe that predicts the benefit of
escalating to an expert and calibrates a threshold on this probe via
offline multiple-hypothesis testing.

We instead study when an unmodified agent should invoke support to
augment its own reasoning, rather than which of two systems should
produce the output. Although we study a different problem, the combination of distribution-free online
calibration, importance-weighted threshold updates under counterfactual
partial feedback, and control of a conditional miss-rate, together with the 
algorithmic techniques we develop may be of
independent interest to the broader umbrella L2D and selective classification literatures.

\paragraph{Inference-Time Policies for Support-Seeking.}
A closely related line of work designs inference-time policies that
decide whether an agent should seek information before acting.
CLAM~\citep{kuhn2023clamselectiveclarificationambiguous} prompts an LLM
to classify whether a user question is ambiguous and asks a clarifying
question when it is. AwN~\citep{wang2026learningaskllmagents} prompts
the LLM to ask whenever it encounters obstacles caused by unclear
instructions during tool use. Earlier proactive-dialogue work uses
prompting to elicit clarifying behavior more
generally~\citep{deng2023promptingevaluatinglargelanguage}. More recent work introduces formal objectives:
KnowNo~\citep{ren2023robotsaskhelpuncertainty} uses split conformal
prediction on a held-out calibration set to construct prediction sets
over candidate user intents, asking for help whenever the set is
non-singleton; the Value-of-Information policy
of~\citet{dong2026valueinformationframeworkhumanagent} maintains a
belief distribution over latent intents, generates candidate clarifying
questions, simulates user replies, and asks when the expected utility
gain exceeds an explicit asking cost.  A complementary perspective is taken by \citet{kiyani2026trustcheapcheckweak},
who frame support-seeking on the \emph{verification} side rather than
the generation side: their weak--strong verification policies decide
when a cheap internal check (e.g., self-consistency, a proxy reward)
suffices and when to defer to costly external verification (e.g., user
inspection), with an online algorithm that provably controls both
acceptance and rejection errors.

We instead provide a unifying framework that brings these approaches
under the same design principles, through an oversight layer with
rigorous finite-sample error control. This enables us to handle a broad
set of support modalities --- clarifying questions, external tools,
additional evidence, expert guidance --- within a single algorithm, to
operate fully online without an offline calibration set, and to control
a counterfactual conditional error at a user-chosen level. Our
experiments target precisely these capabilities: covering information
gathering, tool use, and human-AI collaboration under the same algorithm,
with online error control across a range of target levels.

\paragraph{AI as Decision Support System for humans}
A long line of work studies how AI systems should support human
decision-makers. One strand uses prediction sets as a structured
interface for collaboration, calibrating the set to improve human
accuracy~\citep{straitouri2023improvingexpertpredictionsconformal,babbar2022utilitypredictionsetshumanai,benz2025humanalignmentinfluencesutilityaiassisted,detoni2024humanaicomplementaritypredictionsets,Wang2022ImprovingSP},
analyzing decision-relevant uncertainty~\citep{hullman2025conformalpredictionhumandecision},
and formalizing counterfactual harm from set-based
support~\citep{straitouri2024controllingcounterfactualharmdecision,noorani2025human, noorani2026multi}.
A second strand develops theoretical frameworks for human-AI
complementarity, identifying when joint systems surpass either agent
alone~\citep{bansal2021accurateaibestteammate,steyvers,rastogi2023taxonomyhumanmlstrengths,vaccaro2024combinations}
and how algorithmic outputs shape human
choices~\citep{cowgill-stevenson,donahue2024listsbetteronebenefits,Kleinberg_2021,chan2019assistivemultiarmedbandit,bordt-and-vonluxburg,Tian2023,gao2025humancenteredhumanaicollaborationhchac}. recent surveys taxonomize the broader space of
human-machine hybrid decision-making
paradigms~\citep{punzi}.

In all of these, the human is the final decision-maker and the AI
provides support around them. Our work reverses this configuration: the
AI agent is the decision-maker, and humans, tools, and auxiliary
information serve as support mechanisms around it.

\paragraph{Uncertainty quantification for AI Agents}
A separate literature studies how
to elicit reliable uncertainty estimates from LLMs more generally.
Conformal prediction methods construct prediction sets with formal
coverage guarantees over open-ended LLM
outputs~\citep{quach2024conformallanguagemodeling,shahrokhi2025conformal,kladny2025conformalgenerativemodelingimproved,su2024apienoughconformalprediction,ulmer2024nonexchangeableconformallanguagegeneration,cherian2024largelanguagemodelvalidity,mohri2024languagemodelsconformalfactuality,rubinconformal,liu2024multigroupuncertaintyquantificationlongform,kumar2023conformalpredictionlargelanguage};
a complementary line uses conformal risk control to learn abstention
or selective-prediction
policies~\citep{yadkori2024mitigatingllmhallucinationsconformal,tayebati2025learningconformalabstentionpolicies,yadkori2024believebelievellm}.
A broader thread mitigates hallucinations through direct uncertainty
estimation, semantic-disagreement signals across sampled responses, or
verifier-based
detectors~\citep{liu2024uncertaintyestimationquantificationllms,aichberger2024semanticallydiverselanguagegeneration,farquhar2024detecting,duan2024shiftingattentionrelevancepredictive,wang2023selfconsistencyimproveschainthought,kuhn2023semanticuncertaintylinguisticinvariances,manakul2023selfcheckgptzeroresourceblackboxhallucination},
and recent work fine-tunes LLMs for better-calibrated verbalized
confidence~\citep{kapoor2025largelanguagemodelstaught,li2025conftunertraininglargelanguage}.
These methods produce signals about the agent's confidence in its own
output. Better LLM uncertainty estimators from this line could be used as the anchor in our anchored score, providing a stronger initial signal that calibration-on-the-fly need only refine.

\section{Acknowledgments}
The authors thank EnCORE, the Institute for Emerging CORE Methods in Data Science, for their support, as well as NSF award 2502489 under the MFAI: Mathematical Foundations of Alignment in Generative Artificial Intelligence program. SK additionally acknowledges support from a gift from AWS to Penn Engineering’s ASSET Center for Trustworthy AI.

\bibliography{references} 
\bibliographystyle{plainnat} 

\appendix
\clearpage
\appendix
\tableofcontents

\newpage

\section{Proofs}
\subsection{Proof of Theorem \ref{thm:optimal-population}}
\begin{proof}
Let
\[
G := g(X,Y_0,Y_1), \qquad \pi_1 := \mathbb{P}(G=1),
\]
and assume $\pi_1>0$. For any measurable strategy
$a:\mathcal{X}\times\mathcal{Y}\to\{0,1\}$, the constraint
\[
\mathbb{P}(a(X,Y_0)=0\mid G=1)\le \varepsilon
\]
is equivalent to
\[
\mathbb{P}(a(X,Y_0)=1\mid G=1)\ge 1-\varepsilon.
\]
By the definition
\[
\operatorname{val}(x,y_0):=\mathbb{P}(G=1\mid X=x,Y_0=y_0),
\]
we have
\[
\mathbb{P}(a(X,Y_0)=1\mid G=1)
=
\frac{\mathbb{E}\!\left[a(X,Y_0)\mathbf{1}\{G=1\}\right]}{\pi_1}
=
\frac{\mathbb{E}\!\left[a(X,Y_0)\operatorname{val}(X,Y_0)\right]}{\pi_1},
\]
where the second equality follows from the tower property. Hence
(SDS-Opt) is equivalent to
\[
\min_{a:\mathcal{X}\times\mathcal{Y}\to\{0,1\}}
\mathbb{E}[a(X,Y_0)]
\qquad
\text{subject to}
\qquad
\mathbb{E}[a(X,Y_0)\operatorname{val}(X,Y_0)]\ge (1-\varepsilon)\pi_1.
\]

Let
\[
\beta := (1-\varepsilon)\pi_1.
\]
The objective charges one unit for every point on which support is sought, while the constraint
credits such a point in proportion to $\operatorname{val}(X,Y_0)$. Therefore, an optimal strategy should allocate
support to the largest values of $\operatorname{val}$ first. We now make this formal.

Choose a threshold $\tau^\star\in[0,1]$ such that
\[
\mathbb{E}\!\left[\operatorname{val}(X,Y_0)\mathbf{1}\{\operatorname{val}(X,Y_0)>\tau^\star\}\right]
\le \beta
\le
\mathbb{E}\!\left[\operatorname{val}(X,Y_0)\mathbf{1}\{\operatorname{val}(X,Y_0)\ge \tau^\star\}\right].
\]
If necessary, randomize on the boundary $\{\operatorname{val}(X,Y_0)=\tau^\star\}$ with probability
$\rho^\star\in[0,1]$ chosen so that
\[
\mathbb{E}\!\left[
\operatorname{val}(X,Y_0)
\Big(
\mathbf{1}\{\operatorname{val}(X,Y_0)>\tau^\star\}
+
\rho^\star \mathbf{1}\{\operatorname{val}(X,Y_0)=\tau^\star\}
\Big)
\right]
=
\beta.
\]
Define
\[
a^\star(x,y_0)
=
\mathbf{1}\{\operatorname{val}(x,y_0)>\tau^\star\}
+
\rho^\star \mathbf{1}\{\operatorname{val}(x,y_0)=\tau^\star\}.
\]
Equivalently, $a^\star$ seeks support whenever $\operatorname{val}(x,y_0)$ is above the threshold and randomizes
on the boundary if needed. By construction, $a^\star$ satisfies the constraint.

It remains to show optimality. Let $a$ be any feasible strategy, allowing randomized values in
$[0,1]$; this only enlarges the feasible class. By the definition of $a^\star$, we have pointwise
\[
(a(x,y_0)-a^\star(x,y_0))(\operatorname{val}(x,y_0)-\tau^\star)\le 0.
\]
Taking expectations gives
\[
\mathbb{E}\!\left[(a-a^\star)\operatorname{val}\right]
\le
\tau^\star \mathbb{E}[a-a^\star].
\]
Since $a$ is feasible and $a^\star$ satisfies the constraint with equality,
\[
\mathbb{E}\!\left[(a-a^\star)\operatorname{val}\right]\ge 0.
\]
Therefore,
\[
0 \le \tau^\star \mathbb{E}[a-a^\star].
\]
If $\tau^\star>0$, this implies
\[
\mathbb{E}[a]\ge \mathbb{E}[a^\star].
\]
If $\tau^\star=0$, feasibility requires $a=1$ almost surely on the set where
$\operatorname{val}(X,Y_0)>0$, and $a^\star$ is the minimal such rule, so the same conclusion holds.
Thus no feasible strategy uses support less often than $a^\star$.

Hence an optimal solution is given by a threshold rule in $\operatorname{val}(x,y_0)$, with possible randomization
on the boundary. This proves the claim.
\end{proof}
\subsection{Proof of Theorem \ref{thm:finite-sample-control}}

\begin{proof}
Fix a horizon $T$. For the analysis, define
\[
g_t:=g(x_t,y_0^t,y_1^t)\in\{0,1\}
\]
as the potential benefit of support on round $t$, whether or not support is actually sought. All
probabilities below are over the algorithm's internal Bernoulli draws. If
\[
N_g(T):=\sum_{t=1}^T g_t=0,
\]
then $\widehat{\mathrm{MSE}}(T)=0$ by convention, and the claim is immediate. Hence assume
\[
N:=N_g(T)=\sum_{t=1}^T g_t\ge 1.
\]

We first outline the proof. The threshold update controls a threshold-induced version of the
missed-support error, in which the realized missed-support indicator $(1-a_t)$ is replaced by its
conditional mean $(1-p_t)\mathbf{1}\{s_t<\lambda_t\}$. The update gives a telescoping identity for
this quantity, up to an importance-weighted martingale error. We control this martingale error by
Freedman's inequality. Finally, we compare the threshold-induced error to the actual empirical error
$\widehat{\mathrm{MSE}}(T)$ using a second martingale concentration bound.

\medskip

\noindent\emph{Claim 1.} Define
\[
\overline{\mathrm{MSE}}(T)
:=
\frac{1}{N}\sum_{t=1}^T
g_t(1-p_t)\mathbf{1}\{s_t<\lambda_t\}.
\]
Then, with probability at least $1-\delta/2$,
\[
\overline{\mathrm{MSE}}(T)
\le
\alpha
+
\frac{1+2\eta/\mu}{\eta N}
+
\sqrt{\frac{2\log(4/\delta)}{\mu N}}
+
\frac{\log(4/\delta)}{3\mu N}.
\]

\noindent\emph{Proof of Claim 1.}
Define
\[
\bar e_t
:=
g_t\Big((1-p_t)\mathbf{1}\{s_t<\lambda_t\}-\alpha\Big),
\qquad
\hat e_t
:=
\frac{g_ta_t}{p_t}
\Big((1-p_t)\mathbf{1}\{s_t<\lambda_t\}-\alpha\Big).
\]
The threshold update can be written as
\[
\lambda_{t+1}=\lambda_t-\eta \hat e_t.
\]
Therefore,
\[
\sum_{t=1}^T \hat e_t
=
\frac{\lambda_1-\lambda_{T+1}}{\eta}.
\]
Moreover,
\[
\sum_{t=1}^T \bar e_t
=
N\big(\overline{\mathrm{MSE}}(T)-\alpha\big).
\]

Let $\mathcal{F}_{t-1}$ denote the history before round $t$, and define
\[
\mathcal{G}_t
:=
\sigma(\mathcal{F}_{t-1},x_t,y_0^t,s_t,\lambda_t,p_t,g_t).
\]
This filtration contains all pre-action quantities at round $t$, together with the potential benefit
variable $g_t$ used only for the analysis. Conditional on $\mathcal{G}_t$, the only randomness in
round $t$ is
\[
a_t\sim \mathrm{Bernoulli}(p_t).
\]
Hence
\[
\mathbb{E}\!\left[\hat e_t\mid \mathcal{G}_t\right]
=
g_t\Big((1-p_t)\mathbf{1}\{s_t<\lambda_t\}-\alpha\Big)
\mathbb{E}\!\left[\frac{a_t}{p_t}\,\middle|\,\mathcal{G}_t\right]
=
\bar e_t.
\]
Thus, with
\[
Z_t:=\hat e_t-\bar e_t,
\]
the sequence $(Z_t)_{t\ge 1}$ is a martingale difference sequence, and
\[
N\big(\overline{\mathrm{MSE}}(T)-\alpha\big)
=
\frac{\lambda_1-\lambda_{T+1}}{\eta}
-
\sum_{t=1}^T Z_t.
\]

We next bound the threshold sequence. Since $p_t\ge \mu$ and
\[
\Big|(1-p_t)\mathbf{1}\{s_t<\lambda_t\}-\alpha\Big|\le 1,
\]
we have
\[
|\hat e_t|\le \frac{1}{\mu}.
\]
If $\lambda_t<0$, then $s_t\ge \lambda_t$ because $s_t\in[0,1]$, so $p_t=1$ and
$\mathbf{1}\{s_t<\lambda_t\}=0$. Hence
\[
\hat e_t=-\alpha g_ta_t\le 0,
\]
and therefore $\lambda_{t+1}\ge \lambda_t$. If $\lambda_t>1$, then $s_t<\lambda_t$, so
$p_t=\mu$ and $\mathbf{1}\{s_t<\lambda_t\}=1$. Since $\alpha<1-\mu$,
\[
\hat e_t
=
\frac{g_ta_t}{\mu}\big((1-\mu)-\alpha\big)
\ge 0,
\]
and therefore $\lambda_{t+1}\le \lambda_t$. Starting from $\lambda_1\in[0,1]$, these facts imply
by induction that
\[
\lambda_t\in\left[-\frac{\eta}{\mu},\,1+\frac{\eta}{\mu}\right]
\qquad\text{for all }t\le T+1.
\]
Consequently,
\[
|\lambda_{T+1}-\lambda_1|
\le
1+\frac{2\eta}{\mu}.
\]

It remains to control the martingale term. Write
\[
Z_t
=
c_t\left(\frac{a_t}{p_t}-1\right),
\qquad
c_t
:=
g_t\Big((1-p_t)\mathbf{1}\{s_t<\lambda_t\}-\alpha\Big).
\]
Since $|c_t|\le 1$ and $p_t\ge\mu$,
\[
|Z_t|\le \frac{1}{\mu}.
\]
Furthermore,
\[
\operatorname{Var}(Z_t\mid\mathcal{G}_t)
=
c_t^2
\operatorname{Var}\!\left(\frac{a_t}{p_t}\,\middle|\,\mathcal{G}_t\right).
\]
Because $a_t\sim\mathrm{Bernoulli}(p_t)$,
\[
\operatorname{Var}\!\left(\frac{a_t}{p_t}\,\middle|\,\mathcal{G}_t\right)
=
\frac{1-p_t}{p_t}
\le
\frac{1}{\mu}.
\]
Also $c_t^2\le g_t$, since $g_t\in\{0,1\}$. Therefore,
\[
\operatorname{Var}(Z_t\mid\mathcal{G}_t)
\le
\frac{g_t}{\mu}.
\]
Thus the predictable quadratic variation satisfies
\[
\sum_{t=1}^T \operatorname{Var}(Z_t\mid\mathcal{G}_t)
\le
\frac{N}{\mu}.
\]
By Freedman's inequality, with probability at least $1-\delta/2$,
\[
\left|\sum_{t=1}^T Z_t\right|
\le
\sqrt{\frac{2N\log(4/\delta)}{\mu}}
+
\frac{\log(4/\delta)}{3\mu}.
\]
Combining the preceding displays gives
\[
\overline{\mathrm{MSE}}(T)
\le
\alpha
+
\frac{1+2\eta/\mu}{\eta N}
+
\sqrt{\frac{2\log(4/\delta)}{\mu N}}
+
\frac{\log(4/\delta)}{3\mu N}.
\]
This proves Claim 1. \hfill$\diamond$

\medskip

\noindent\emph{Claim 2.} With probability at least $1-\delta/2$,
\[
\widehat{\mathrm{MSE}}(T)
\le
\overline{\mathrm{MSE}}(T)
+
\sqrt{\frac{2\log(4/\delta)}{N}}
+
\frac{\log(4/\delta)}{3N}.
\]

\noindent\emph{Proof of Claim 2.}
By definition,
\[
\widehat{\mathrm{MSE}}(T)
=
\frac{1}{N}\sum_{t=1}^T g_t(1-a_t).
\]
Since $p_t=1$ whenever $s_t\ge\lambda_t$, we have $a_t=1$ almost surely on those rounds.
Therefore,
\[
\widehat{\mathrm{MSE}}(T)
=
\frac{1}{N}\sum_{t=1}^T
g_t(1-a_t)\mathbf{1}\{s_t<\lambda_t\}.
\]
Define
\[
U_t
:=
g_t\mathbf{1}\{s_t<\lambda_t\}
\Big((1-a_t)-(1-p_t)\Big)
=
g_t\mathbf{1}\{s_t<\lambda_t\}(p_t-a_t).
\]
Then
\[
\widehat{\mathrm{MSE}}(T)-\overline{\mathrm{MSE}}(T)
=
\frac{1}{N}\sum_{t=1}^T U_t.
\]
Moreover,
\[
\mathbb{E}[U_t\mid\mathcal{G}_t]=0,
\]
so $(U_t)_{t\ge 1}$ is a martingale difference sequence. Since $|U_t|\le 1$ and
\[
\operatorname{Var}(U_t\mid\mathcal{G}_t)
=
g_t\mathbf{1}\{s_t<\lambda_t\}
\operatorname{Var}(a_t\mid\mathcal{G}_t)
=
g_t\mathbf{1}\{s_t<\lambda_t\}p_t(1-p_t)
\le
g_t,
\]
the predictable quadratic variation is at most $N$. Freedman's inequality gives, with probability at
least $1-\delta/2$,
\[
\left|\sum_{t=1}^T U_t\right|
\le
\sqrt{2N\log(4/\delta)}
+
\frac{\log(4/\delta)}{3}.
\]
Dividing by $N$ proves Claim 2. \hfill$\diamond$

\medskip

Combining Claims 1 and 2 with a union bound, with probability at least $1-\delta$,
\[
\widehat{\mathrm{MSE}}(T)
\le
\alpha
+
\frac{1+2\eta/\mu}{\eta N}
+
\sqrt{\frac{2\log(4/\delta)}{\mu N}}
+
\frac{\log(4/\delta)}{3\mu N}
+
\sqrt{\frac{2\log(4/\delta)}{N}}
+
\frac{\log(4/\delta)}{3N}.
\]
Since $\mu\le 1$,
\[
\sqrt{\frac{2\log(4/\delta)}{N}}
\le
\sqrt{\frac{2\log(4/\delta)}{\mu N}},
\qquad
\frac{\log(4/\delta)}{3N}
\le
\frac{\log(4/\delta)}{3\mu N}.
\]
Therefore,
\[
\widehat{\mathrm{MSE}}(T)
\le
\alpha
+
\frac{1+2\eta/\mu}{\eta N}
+
2\sqrt{\frac{2\log(4/\delta)}{\mu N}}
+
\frac{2\log(4/\delta)}{3\mu N}.
\]
This is slightly stronger than the stated bound. Since
\[
2\sqrt{\frac{2\log(4/\delta)}{\mu N}}
=
\sqrt{\frac{8\log(4/\delta)}{\mu N}},
\]
and
\[
\frac{2\log(4/\delta)}{3\mu N}
\le
\frac{4\log(4/\delta)}{3\mu N},
\]
we obtain
\[
\widehat{\mathrm{MSE}}(T)
\le
\alpha
+
\frac{1+2\eta/\mu}{\eta N}
+
\sqrt{\frac{8\log(4/\delta)}{\mu N}}
+
\frac{4\log(4/\delta)}{3\mu N}.
\]
Recalling that $N=N_g(T)$ proves the theorem.
\end{proof}
\section{Additional Experimental Results}
\subsection{Support rate across all score variants}
\label{app:dss-all-variants}

Figure~\ref{fig:error-control} in the main text shows the cumulative support rate for the best-performing variant in each of the Anchored and Representation families, together with its counterpart in the other family. Figure~\ref{app-fig:dss-all-variants} reports the same quantity for every score variant we consider, providing the full picture of how each embedding choice and family performs at matched missed-support error. 
\begin{figure}
    \centering
    \includegraphics[width=\linewidth]{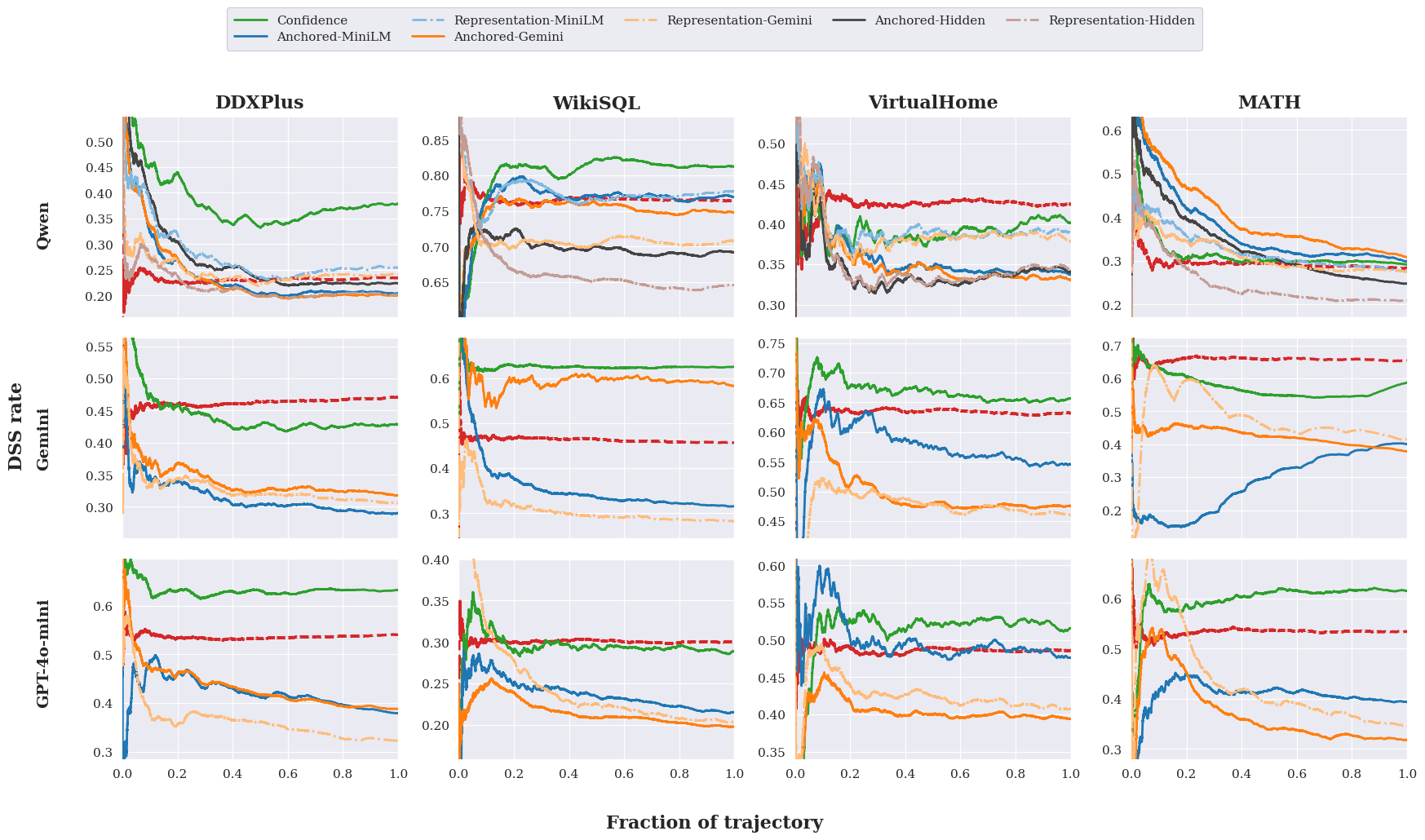}
    \caption{Cumulative support rate $\widehat{\mathrm{SR}}_T = \tfrac{1}{T}\sum_{t=1}^T a_t$ across all task--model pairs, showing every score variant. Rows are base agents, columns are tasks. All variants are run at the same target $\alpha$ as in Figure~\ref{fig:error-control}.}
    \label{app-fig:dss-all-variants}
\end{figure}

\subsection{Ablation on the exploration probability $\mu$}
\label{app:mu-ablation}

Theorem~\ref{thm:finite-sample-control} identifies the exploration probability $\mu$ as the lever that controls the second slack term in the missed-support error bound. larger values of $\mu$ tighten error control and yield smoother convergence, but increase support usage, since the algorithm calls support with probability $\mu$ on every round where the score falls below the threshold. Here we verify this empirically by sweeping $\mu$ on a fixed task--model--score configuration and observing how the missed-support error and the support rate move together.

We fix the base agent to GPT-4o-mini, the task to DDXPlus, and the score to Anchored-Gemini. The threshold step size $\eta_t$ and initial threshold $\lambda_0$ are held constant and only $\mu$ is varied. The configurations are listed in Table~\ref{tab:mu_ablation}.

Figure~\ref{app-fig:mu-ablation} reports the cumulative missed-support error and the cumulative support rate across the five values of $\mu$. The two panels show the predicted tradeoff. On the error-control side, larger values of $\mu$ produce smoother trajectories that converge faster to the target $\alpha$, while small values exhibit noisier adaptation. On the support-efficiency side, small $\mu$ delivers substantially lower support rates, while large $\mu$ pushes the support rate up because the algorithm queries support with non-negligible probability even on rounds where the score is decisively low. In our main experiments we therefore choose moderate values of $\mu$ that achieve reliable error control without sacrificing the support-rate gains.

\begin{figure}[htp]
    \centering
    \includegraphics[width=\linewidth]{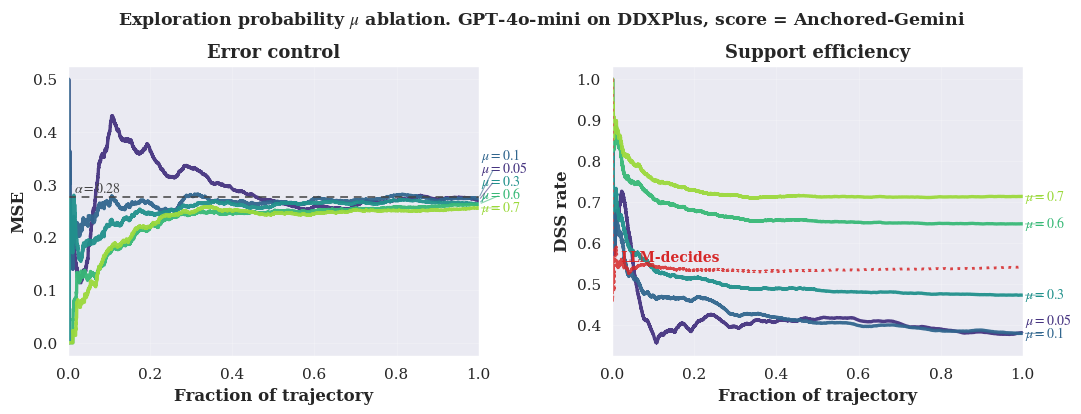}
    \caption{Effect of the exploration probability $\mu$. Base agent: GPT-4o-mini. Task: DDXPlus. Score: Anchored-Gemini. Left: cumulative missed-support error against the target $\alpha$. Right: cumulative support rate, with the \textsc{LLM-Decides} baseline shown for reference. Larger $\mu$ tightens error control and yields smoother convergence but increases support usage, matching the dependence on $\mu$ in the slack term of Theorem~\ref{thm:finite-sample-control}.}
    \label{app-fig:mu-ablation}
\end{figure}

\begin{table}[htp]
\centering\small
\setlength{\tabcolsep}{8pt}
\renewcommand{\arraystretch}{1.15}
\begin{tabular}{@{}ccc@{}}
\toprule
$\mu$ & $\eta_\lambda$ & $\lambda_0$ \\
\midrule
0.05 & 0.015 & 0.5 \\
0.10 & 0.015 & 0.5 \\
0.30 & 0.015 & 0.5 \\
0.60 & 0.015 & 0.5 \\
0.70 & 0.015 & 0.5 \\
\bottomrule
\end{tabular}
\caption{Hyperparameters for the exploration-probability ablation. Base agent: GPT-4o-mini. Task: DDXPlus. Score: Anchored-Gemini. $\mu$ is varied while $\eta_t$ and $\lambda_0$ are held at the base values from the main experiments.}
\end{table}
\label{tab:mu_ablation}
\subsection{Sensitivity to the choice of $g$ on VirtualHome}
\label{app:vh-gain-defs}

The benefit function $g$ encodes what it means for support to materially help on a given task, and its definition naturally depends on the application and on what the provider considers a meaningful improvement. The purpose of this section is to verify that our oversight layer is agnostic to this choice: error control holds for any $g$, and the qualitative ranking of score variants is preserved across different definitions.

We illustrate this on VirtualHome, which is the most since the output is a structured plan rather than a verifiable answer. Each generated plan is evaluated against the gold plan using its longest common subsequence (LCS) score, which measures how much of the gold plan's ordered structure the generated plan recovers. We compare two natural gain definitions built on top of this score:

\begin{itemize}
    \item $g_\mathrm{strict}$ marks support as beneficial when the supported plan reaches an absolute quality bar, $y_1^\mathrm{LCS} \geq 0.5$. This captures the view that improvement matters only if the supported plan crosses a fixed threshold of acceptability.
    \item $g_\mathrm{improved}$ marks support as beneficial whenever the supported plan strictly improves on the unsupported plan in LCS score. This is the most permissive definition and treats any strict improvement as material.
\end{itemize}

The two definitions impose qualitatively different demands. The first asks whether $y_1$ is good in absolute terms, the second asks whether it is better than $y_0$. They induce different positive rates and therefore different targets: the missed-support rate of \textsc{LLM-Decides} is $\alpha = 0.51$ under $g_\mathrm{strict}$ and $\alpha = 0.34$ under $g_\mathrm{improved}$, and we set the target accordingly so all comparisons remain at matched error.

Figure~\ref{app-fig:vh-gain-defs} reports both the cumulative missed-support error (top row) and the cumulative support rate (bottom row) for Gemini-2.5-Flash on VirtualHome under both gain definitions. Two observations stand out. First, error control holds in every panel. The running missed-support error converges to the corresponding target $\alpha$ regardless of which $g$ is used, consistent with Theorem~\ref{thm:finite-sample-control}. Second, the qualitative ranking of methods carries over: the adaptive parameterized score variants reduce the support rate substantially relative to \textsc{LLM-Decides}. This confirms that $g$ is a flexible engineering choice the practitioner can tailor to their task and their notion of meaningful improvement, without changing the algorithm or its guarantees.

\begin{figure}[htp]
    \centering
    \includegraphics[width=\linewidth]{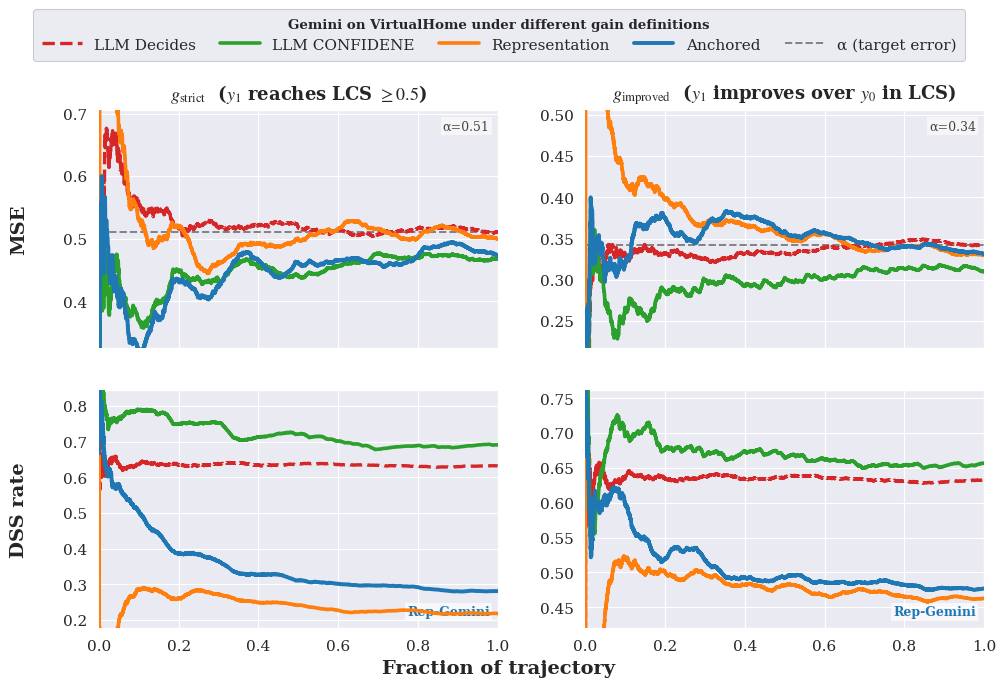}
    \caption{Gemini-2.5-Flash on VirtualHome under two gain definitions. Columns are gain definitions: $g_\mathrm{strict}$ ($y_1$ reaches LCS $\geq 0.5$) on the left, $g_\mathrm{improved}$ ($y_1$ improves over $y_0$ in LCS) on the right. Top row reports the cumulative missed-support error against the corresponding target $\alpha$; bottom row reports the cumulative support rate. Across both definitions, the running error converges to $\alpha$ and the parameterized variants reduce support usage relative to \textsc{LLM-Decides}.}
    \label{app-fig:vh-gain-defs}
\end{figure}

\subsection{Scores histogram across the online stream}

To complement the support-rate results in Section~\ref{sec:exp}, we visualize the score distributions produced by each score family along the online stream. Figure~\ref{app-fig:score-histograms} shows histograms of the score $s$ split by the latent benefit variable $g$ for Gemini-2.5-Flash on all four tasks. The three rows correspond to the three score families introduced in Section~\ref{subsec:score-design}: the raw LLM confidence $\hat{g}_\mathrm{LLM}$, the Anchored score that adds a calibrated residual on top of $\hat{g}_\mathrm{LLM}$, and the Representation score that drops the anchor and learns from the embedding alone. The Anchored and Representation variants both use the \textsc{Gemini-Embedding-2} model  ($\phi = $ Gemini) and are calibrated on the fly.

The top row makes the limitation of relying on raw confidence concrete. On WikiSQL and MATH, both $g = 0$ and $g = 1$ pile up at the similar scores, so LLM Confidence carries almost no information about whether support would help; on DDXPlus and VirtualHome the two distributions overlap heavily as well. A threshold rule applied directly to this score is therefore forced to either accept too many $g=0$ instances or call support on too many $g=1$ instances, which is why Confidence ends up close to or above \textsc{LLM-Decides} in support efficiency.

The middle and bottom rows show what calibration-on-the-fly recovers from the same stream. The Anchored score sharpens the confidence signal where it carries useful information (DDXPlus, VirtualHome) and corrects it where it does not (WikiSQL, MATH), pushing the $g=1$ mass toward high scores and the $g=0$ mass toward low scores. The Representation score, which has no anchor to start from, learns this separation directly from the embedding. This is the mechanism that makes the parameterized score families consistently outperform Confidence in Figure~\ref{fig:dss-main}. Even when the underlying signal is uninformative or misleading, the calibration-on-the-fly extracts a usable separation between beneficial and non-beneficial instances from the feedback received on rounds where support is invoked.

\begin{figure}[htp]
    \centering
    \includegraphics[width=\linewidth]{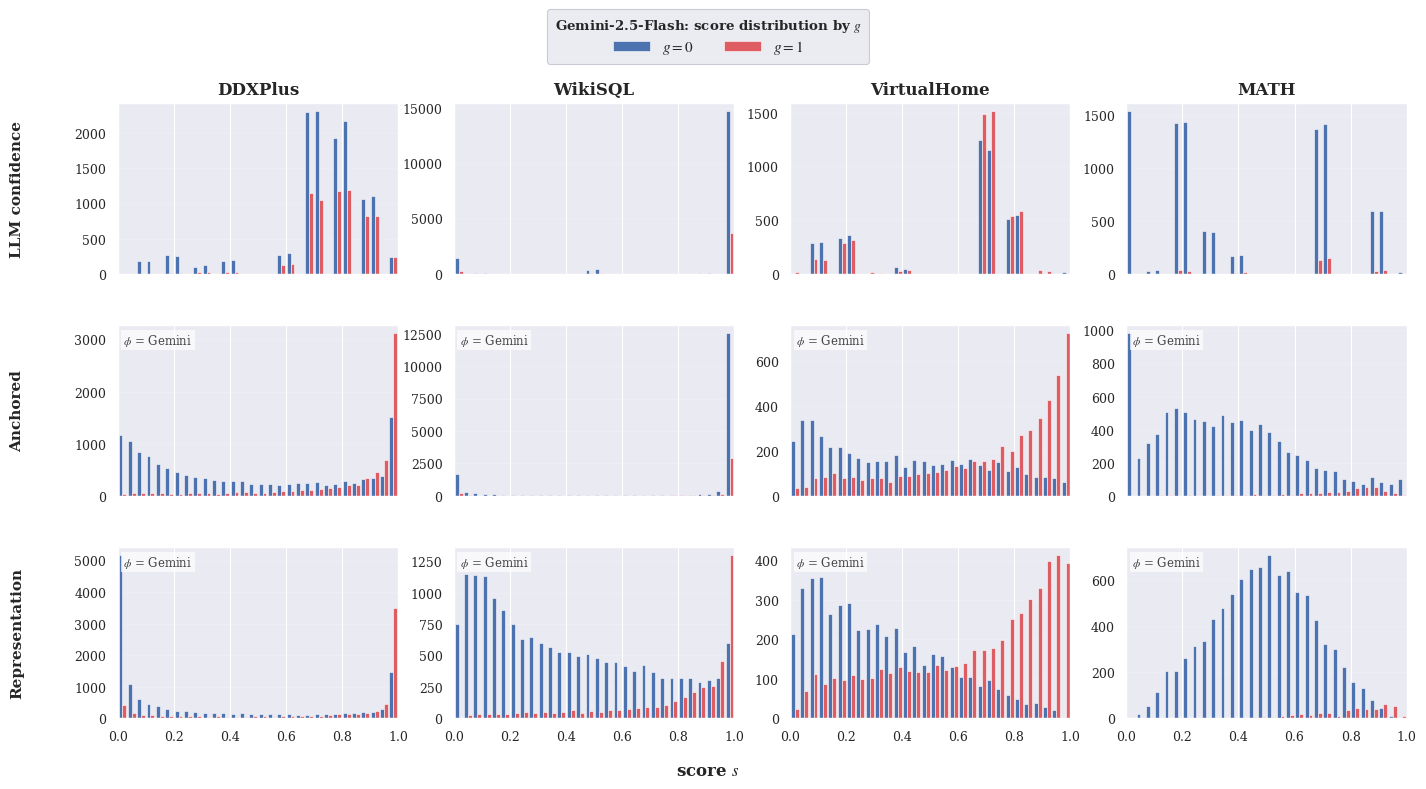}
    \caption{Score distributions along the online stream for Gemini-2.5-Flash, split by the latent benefit variable $g$. Rows are score families (LLM confidence, Anchored, Representation) and columns are tasks (DDXPlus, WikiSQL, VirtualHome, MATH). Anchored and Representation variants both use the Gemini embedding ($\phi = $ Gemini) and are calibrated on the fly. Raw confidence often fails to separate $g=0$ from $g=1$, particularly on WikiSQL and MATH where both classes concentrate at the same scores. After calibration-on-the-fly, the Anchored and Representation scores recover separation between the two classes, providing the threshold rule with a more informative signal.}
    \label{app-fig:score-histograms}
\end{figure}

\subsection{Per-task specifics and descriptive statistics}
\label{app:per-task-statistics}
Table~\ref{app-tab:per_task_stats} reports descriptive statistics for each task--model pair. We include the accuracy of the unsupported output $y_0$, the accuracy of the supported output $y_1$, the empirical probability $\Pr(g=1)$ that support is beneficial, and the support rate and missed-support error of the \textsc{LLM-Decides} baseline. These quantities highlight that the value of support depends jointly on the task and the base model: even within the same task, support can be far more useful for one model than another. For example, on DDXPlus, Gemini-2.5-Flash benefits substantially more from the additional clinical evidence than Qwen-2.5-7B, despite both receiving identical support. The headroom for support to help is therefore a property of the task, model, and support modality together, rather than of the task alone. This variability is precisely what motivates the oversight layer we develop in the paper. Rather than relying on the agent's own support-seeking behavior, which can drift far from $\Pr(g=1)$ in either direction, our algorithm adapts online to the specific task--model--support combination at hand and controls the missed-support error at a user-chosen level.
\begin{table}[htp]

\centering\small
\setlength{\tabcolsep}{6pt}
\renewcommand{\arraystretch}{1.15}
\begin{tabular}{@{}llccccc@{}}
\toprule
Model & Task & $y_0$ accuracy & $y_1$ accuracy & $\Pr(g=1)$ & LLM DSS rate & LLM MSE \\
\midrule
\multirow{4}{*}{\textbf{Qwen-2.5-7B}} & DDXPlus & 37.4\% & 36.9\% & 0.106 & 0.234 & 0.708 \\
 & WikiSQL & 60.2\% & 74.4\% & 0.240 & 0.764 & 0.108 \\
 & VirtualHome & 8.4\% & 9.6\% & 0.416 & 0.424 & 0.577 \\
 & MATH & 57.4\% & 66.6\% & 0.168 & 0.284 & 0.714 \\
\midrule
\multirow{4}{*}{\textbf{Gemini-2.5-Flash}} & DDXPlus & 49.8\% & 76.5\% & 0.341 & 0.471 & 0.528 \\
 & WikiSQL & 63.4\% & 76.5\% & 0.190 & 0.457 & 0.425 \\
 & VirtualHome & 13.6\% & 20.0\% & 0.514 & 0.632 & 0.342 \\
 & MATH & 90.8\% & 92.2\% & 0.042 & 0.654 & 0.143 \\
\midrule
\multirow{4}{*}{\textbf{GPT-4o-mini}} & DDXPlus & 44.1\% & 54.7\% & 0.203 & 0.540 & 0.276 \\
 & WikiSQL & 70.8\% & 79.3\% & 0.167 & 0.300 & 0.633 \\
 & VirtualHome & 15.0\% & 25.8\% & 0.614 & 0.486 & 0.463 \\
 & MATH & 60.6\% & 68.4\% & 0.130 & 0.534 & 0.369 \\
\bottomrule
\end{tabular}
\caption{Per (task,model) descriptive statistics for each base agent.}
\label{app-tab:per_task_stats}
\end{table}
\subsection{Error Control for all datasets}
\label{app:error-control-all}

Figure~\ref{app-fig:error-control-all} reports the cumulative missed-support error $\mathrm{MSE}(T)$ across the full $3 \times 4$ grid of base agents and tasks. The main text shows convergence for a single representative pair (Figure~\ref{fig:error-control}); here we verify that the same behavior holds in every panel. Across all twelve task--model pairs and across every score variant, the running MSE converges to the target level $\alpha$, set to the rate achieved by \textsc{LLM-Decides} on that pair. This confirms that the finite-sample guarantee of Theorem~\ref{thm:finite-sample-control} holds uniformly across tasks, models, and score families. The bound in that theorem decomposes into two qualitatively different sources of slack. The first is the intrinsic error of online quantile tracking, governed by the threshold step size $\eta$: larger values facilitate faster initial convergence but produce noisier trajectories, while smaller values yield smoother curves at the cost of slower adaptation. The second arises from partial feedback and the randomized exploration needed to obtain unbiased feedback, and is controlled by the exploration probability $\mu$: larger $\mu$ tightens error control but increases support usage. We study the dependence on $\mu$ empirically in Appendix~\ref{app:mu-ablation}.

\begin{figure}[htp]
    \centering
    \includegraphics[width=\linewidth]{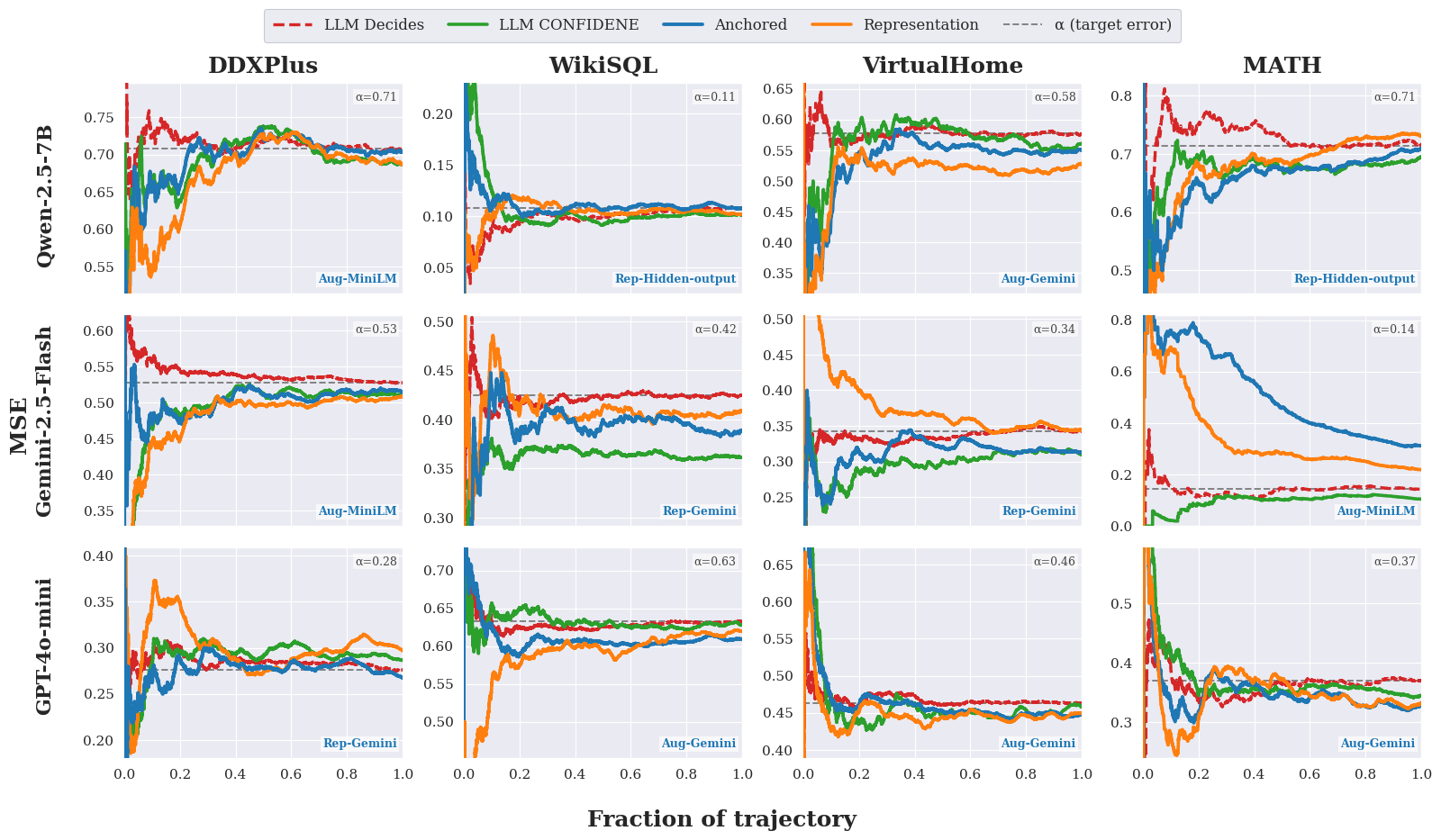}
    \caption{Cumulative missed-support error $\mathrm{MSE}(T)$ across all task--model pairs. Rows are base agents (Qwen-2.5-7B, Gemini-2.5-Flash, GPT-4o-mini), columns are tasks (DDXPlus, WikiSQL, VirtualHome, MATH). Each panel shows the running MSE for all score variants together with the \textsc{LLM-Decides} baseline and the target level $\alpha$. All variants converge towards $\alpha$ regardless of the score family.}
    \label{app-fig:error-control-all}
\end{figure}

\subsection{Hyperparameters}
\label{app:hyperparameters}
For completeness, Table~\ref{tab:hp_all} reports the full hyperparameter configuration used to produce the main-text results across all base agents, tasks, and score variants. We report the calibration step size $\eta_t$, the threshold step size $\gamma_t$, the exploration probability $\mu$, and the initial threshold $\lambda_0$. Confidence has no learnable parameters and so its $\eta_t$ entry is left blank. Hidden-state variants are only available in the white-box regime and are reported for Qwen-2.5-7B; the corresponding rows for Gemini-2.5-Flash and GPT-4o-mini are left blank.

The values reported in Table~\ref{tab:hp_all} were selected by a simple grid search over a small set of candidate values on a held out test stream for each hyperparameter, with the support rate at the end of the trajectory used as the selection criterion. We did not find this search to be delicate. Across all task--model pairs, a wide range of reasonable choices yield similar performance, and the qualitative conclusions of the paper are insensitive to the exact configuration. In practice, the threshold step size $\eta_t$ governs how quickly the algorithm adapts; the exploration probability $\mu$ trades off tighter error control against higher support usage and is studied directly in Appendix~\ref{app:mu-ablation}; and the score update step size $\gamma_t$ and initial threshold $\lambda_0$ have only secondary effects once the other two are set sensibly. We therefore expect a practitioner to obtain comparable results without an extensive hyperparameter search.

\begin{table}[htp]
\centering\scriptsize
\setlength{\tabcolsep}{2.5pt}
\renewcommand{\arraystretch}{1.1}
\resizebox{\textwidth}{!}{%
\begin{tabular}{@{}ll|cccc|cccc|cccc|cccc@{}}
\toprule
Model & Variant & \multicolumn{4}{c}{\textbf{DDXPlus}} & \multicolumn{4}{c}{\textbf{WikiSQL}} & \multicolumn{4}{c}{\textbf{VirtualHome}} & \multicolumn{4}{c}{\textbf{MATH}} \\
\cmidrule(lr){3-6} \cmidrule(lr){7-10} \cmidrule(lr){11-14} \cmidrule(lr){15-18}
 & & $\gamma_t$ & $\eta_t$ & $\mu$ & $\lambda_0$ & $\gamma_t$ & $\eta_t$ & $\mu$ & $\lambda_0$ & $\gamma_t$ & $\eta_t$ & $\mu$ & $\lambda_0$ & $\gamma_t$ & $\eta_t$ & $\mu$ & $\lambda_0$ \\
\midrule
\multirow{7}{*}{\textbf{Qwen-2.5-7B}} & Confidence & --- & 0.05 & 0.1 & 0.4 & --- & 0.01 & 0.2 & 0.3 & --- & 0.05 & 0.1 & 0.4 & --- & 0.1 & 0.25 & 0.6 \\
 & Anchored-MiniLM & 2 & 0.05 & 0.1 & 0.5 & 2 & 0.01 & 0.2 & 0.3 & 2 & 0.05 & 0.1 & 0.5 & 2 & 0.01 & 0.1 & 0.6 \\
 & Representation-MiniLM & 10 & 0.05 & 0.1 & 0.5 & 2 & 0.01 & 0.2 & 0.3 & 10 & 0.05 & 0.1 & 0.5 & 2 & 0.01 & 0.1 & 0.6 \\
 & Anchored-Gemini & 2 & 0.05 & 0.1 & 0.5 & 2 & 0.01 & 0.2 & 0.3 & 2 & 0.05 & 0.1 & 0.5 & 2 & 0.01 & 0.1 & 0.6 \\
 & Representation-Gemini & 5 & 0.05 & 0.1 & 0.5 & 2 & 0.01 & 0.2 & 0.3 & 10 & 0.05 & 0.1 & 0.5 & 2 & 0.01 & 0.1 & 0.6 \\
 & Anchored-Hidden & 2 & 0.05 & 0.1 & 0.5 & 2 & 0.01 & 0.2 & 0.3 & 0.5 & 0.05 & 0.1 & 0.5 & 5 & 0.01 & 0.1 & 0.6 \\
 & Representation-Hidden & 2 & 0.05 & 0.1 & 0.5 & 2 & 0.01 & 0.2 & 0.3 & 2 & 0.05 & 0.1 & 0.5 & 5 & 0.01 & 0.1 & 0.6 \\
\midrule
\multirow{7}{*}{\textbf{Gemini-2.5-Flash}} & Confidence & --- & 0.01 & 0.1 & 0.5 & --- & 0.001 & 0.3 & 0.99 & --- & 0.01 & 0.05 & 0.6 & --- & 0.01 & 0.1 & 0.2 \\
 & Anchored-MiniLM & 2 & 0.05 & 0.1 & 0.5 & 2 & 0.05 & 0.1 & 0.5 & 1 & 0.05 & 0.05 & 0.5 & 1 & 0.01 & 0.1 & 0.7 \\
 & Representation-MiniLM & 2 & 0.01 & 0.2 & 0.3 & 2 & 0.01 & 0.2 & 0.3 & 2 & 0.01 & 0.2 & 0.3 & 2 & 0.01 & 0.2 & 0.3 \\
 & Anchored-Gemini & 5 & 0.05 & 0.1 & 0.5 & 5 & 0.05 & 0.1 & 0.5 & 10 & 0.01 & 0.05 & 0.6 & 5 & 0.01 & 0.1 & 0.6 \\
 & Representation-Gemini & 20 & 0.05 & 0.1 & 0.5 & 5 & 0.05 & 0.1 & 0.5 & 10 & 0.01 & 0.05 & 0.9 & 5 & 0.05 & 0.1 & 0.65 \\
 & Anchored-Hidden & --- & --- & --- & --- & --- & --- & --- & --- & --- & --- & --- & --- & --- & --- & --- & --- \\
 & Representation-Hidden & --- & --- & --- & --- & --- & --- & --- & --- & --- & --- & --- & --- & --- & --- & --- & --- \\
\midrule
\multirow{7}{*}{\textbf{GPT-4o-mini}} & Confidence & --- & 0.01 & 0.1 & 0.5 & --- & 0.1 & 0.1 & 0.5 & --- & 0.01 & 0.05 & 0.5 & --- & 0.05 & 0.4 & 0.5 \\
 & Anchored-MiniLM & 2 & 0.015 & 0.1 & 0.5 & 2 & 0.08 & 0.1 & 0.7 & 1 & 0.05 & 0.05 & 0.7 & 10 & 0.1 & 0.25 & 0.8 \\
 & Representation-MiniLM & 2 & 0.01 & 0.2 & 0.3 & 2 & 0.01 & 0.2 & 0.3 & 2 & 0.01 & 0.2 & 0.3 & 2 & 0.01 & 0.2 & 0.3 \\
 & Anchored-Gemini & 5 & 0.015 & 0.1 & 0.5 & 5 & 0.005 & 0.1 & 0.5 & 10 & 0.01 & 0.05 & 0.5 & 20 & 0.05 & 0.1 & 0.8 \\
 & Representation-Gemini & 20 & 0.01 & 0.05 & 0.5 & 10 & 0.01 & 0.05 & 0.5 & 10 & 0.01 & 0.05 & 0.5 & 5 & 0.05 & 0.1 & 0.8 \\
 & Anchored-Hidden & --- & --- & --- & --- & --- & --- & --- & --- & --- & --- & --- & --- & --- & --- & --- & --- \\
 & Representation-Hidden & --- & --- & --- & --- & --- & --- & --- & --- & --- & --- & --- & --- & --- & --- & --- & --- \\
\bottomrule
\end{tabular}}
\caption{Hyperparameters across all base agents and tasks. $\gamma_t$ is the calibration-on-the-fly step size, $\eta_t$ the threshold step size, $\mu$ the exploration probability, $\lambda_0$ the initial threshold. Confidence has no learnable parameters; its $\eta_t$ cell is left blank.}
\label{tab:hp_all}
\end{table}

\subsection{Task implementation details and example prompts}
\label{app:prompts}

We instantiate each of the four tasks from Section~\ref{sec:exp}. The base agent producing $y_0$ and $y_1$ is one of Qwen-2.5-7B, Gemini-2.5-Flash, or GPT-4o-mini, identical across all four tasks. The support modality varies by task: revealed follow-up questions, examination findings, and laboratory results on DDXPlus; SQL query formulation and execution against the table on WikiSQL; scene-specific object locations and constraints on VirtualHome; and targeted guidance from a stronger reasoner on MATH. For MATH, we use DeepSeek as the stronger reasoner that responds to the base agent's questions; the base agent itself remains one of the three models above and is responsible for both the initial solution $y_0$ and the revised solution $y_1$ that incorporates the guidance.

For concreteness, we provide the full set of prompts used in the MATH pipeline below. They cover different stages of the interaction: producing the initial solution $y_0$, eliciting the self-reported confidence $\hat{g}_\mathrm{LLM}$ used as the black-box anchor signal in Section~\ref{subsec:score-design}, deciding whether to seek support (used by the \textsc{LLM-Decides} baseline), formulating targeted questions for the stronger reasoner, generating the expert guidance, and producing the revised solution $y_1$. The prompts for the other three tasks follow the same overall structure, adapted to the task-specific support modality, and will be released alongside the code upon acceptance.
\begin{promptbox}{Weak Solver --- Initial Solution $y_0$ and Self-Reported Confidence $\hat{g}_{\mathrm{LLM}}$ (MATH)}
\label{prompt:math-y0-ghat}
\textbf{System:} You are a math competition solver and calibration assistant. \\
\\
\textbf{User:} Solve the following problem and assess your need for expert support. \\
\\
\textbf{Problem:} \{\{problem\}\} \\
\\
Respond in the following JSON format only, with no additional text: \\
\\
\texttt{\{} \\
\texttt{\ \ "solution": "<step-by-step reasoning, with the final answer in \textbackslash boxed\{\}>",} \\
\texttt{\ \ "confidence": <number between 0.0 and 1.0>} \\
\texttt{\}} \\
\\
For the \texttt{confidence} field: if you could ask a math expert specific questions about this problem, how likely is it that their response would lead you to a different, correct answer? Consider whether you were guessing, whether any step is unsure, and whether knowing the right technique would change your answer. Higher values mean expert input is more likely to change your answer.
\end{promptbox}

\begin{promptbox}{Weak Solver --- Decision to Seek Support (MATH)}
\label{prompt:math-decide}
\textbf{System:} You are a careful math student. If there is any doubt about your work, it is worth asking. \\
\\
\textbf{User:} Problem: \{\{problem\}\} \\
Your work: \\
\{\{y0\_raw\}\} \\
Your answer: \{\{y0\_answer\}\} \\
\\
You can ask a math expert questions about this problem. Their guidance could help you find errors or discover a better approach. Asking costs time but can prevent submitting a wrong answer. Thus you should ask if you think it will improve your final answer, and you should proceed with the current answer if you think it is unlikely that expert guidance will change and improve your answer. \\
\\
Consider: Are you certain your approach is correct, or were there steps where you guessed or felt unsure? \\
\\
Respond with ONLY one of: \\
\textbf{REQUEST} --- if any step felt uncertain \\
\textbf{PROCEED} --- if you are confident in every step
\end{promptbox}

\begin{promptbox}{Weak Solver --- Question Formulation (MATH)}
\label{prompt:math-questions}
\textbf{System:} You are a math student asking your tutor for help. Ask specific, targeted questions about parts you're stuck on. Do NOT ask for the full solution or the final answer. \\
\\
\textbf{User:} You attempted this problem and got \{\{y0\_answer\}\}: \\
\\
\{\{problem\}\} \\
\\
Your work: \\
\{\{y0\_raw\}\} \\
\\
You can ask a math expert up to 3 questions about this problem. They will help you without solving the entire problem for you. \\
\\
Ask about the specific steps or concepts you're most unsure about. For example: \\
- ``I used [method] --- is this the right approach?'' \\
- ``I got [result] at this step --- is this correct?'' \\
- ``How should I handle [specific part]?'' \\
- ``What identity or theorem applies to [this expression]?'' \\
\\
Write your questions:
\end{promptbox}

\begin{promptbox}{Strong Expert --- Targeted Guidance (MATH)}
\label{prompt:math-expert}
\textbf{System:} You are a math tutor. A student is working on a problem and has asked you specific questions. Be maximally helpful --- give clear, direct, and detailed answers to each of their questions. If they made an error, point it out and explain why it's wrong. If they're using the wrong approach, tell them the right one and explain the key first step. If they ask about a technique, explain it concretely with enough detail that they can apply it. \\
\\
The ONE thing you must NOT do is solve the entire problem for them or state the final answer. Help them get unstuck, but let them finish the last steps on their own. \\
\\
\textbf{User:} Problem: \\
\{\{problem\}\} \\
\\
Student's questions: \\
\{\{student\_questions\}\}
\end{promptbox}

\begin{promptbox}{Weak Solver --- Revised Solution $y_1$ with Guidance (MATH)}
\label{prompt:math-y1}
\textbf{System:} You are a math competition solver. You previously attempted this problem and got stuck on a specific step, so you asked an expert for guidance. Now solve the problem from scratch using their advice. Do NOT repeat your previous mistakes. Put your final answer in \texttt{\textbackslash boxed\{\}}. \\
\\
\textbf{User:} Problem: \\
\{\{problem\}\} \\
\\
The question you asked the expert: \\
\{\{expert\_question\}\} \\
\\
Expert guidance: \\
\{\{expert\_answer\}\} \\
\\
Solve the problem using this guidance. Put your final answer in \texttt{\textbackslash boxed\{\}}.
\end{promptbox}

\subsection{Operational Variants for the Score Input}
\label{app:operational}

The support decision can be made on the basis of any information available at decision time. We outline three natural choices, all of which fit within the framework of Section~\ref{sec:fundamentals} and inherit the guarantee of Theorem~\ref{thm:finite-sample-control}.

\paragraph{Score on $x$ alone.} The score is computed from the input directly, $s = s_\theta(\phi(x))$, before any output is generated. This is the cheapest option, and is appropriate when $y_0$ is expensive to produce relative to the score itself, e.g., when it requires a long reasoning chain or an expensive tool call.

\paragraph{Score on $(x, y_0)$.} The agent generates $y_0$ first and then scores $(x, y_0)$. This costs an extra forward pass per round but gives the score strictly more information.

\paragraph{Score on $(x, \tilde{y}_0)$ for a cheap surrogate $\tilde{y}_0$.} When $y_0$ itself is expensive, the score can use a cheap surrogate available at decision time, such as the first few tokens of a generation, an intermediate reasoning trace, or a smaller model's prediction. Two conditions must hold: $\tilde{y}_0$ must be available before the support decision, and the feedback signal $g_t$ must still be computable on rounds where support is sought.

The threshold-update guarantee in Theorem~\ref{thm:finite-sample-control} holds for any of these choices. The choice is purely operational as it sets the cost of running the algorithm and the quality of the score, both of which determine how often support must be invoked to hit a given missed-support error.

Figure~\ref{app-fig:operational-variants} reports the empirical comparison on Gemini-2.5-Flash, DDXPlus, with the Anchored-Gemini score. We include the three variants above as well as two reference points that drop the input from the score: scoring on $y_0$ alone, and scoring on the reasoning trace alone. All five variants control the missed-support error at the target $\alpha$, consistent with Theorem~\ref{thm:finite-sample-control}. On support efficiency, the three variants that include $x$ or the reasoning trace (with or without $y_0$) achieve essentially identical support rates, while the variants that is only based on generated  $y_0$ is noticeably worse. The takeaway is that $x$ or the reasoning traces carry most of the signal needed to predict whether support would be beneficial: adding $y_0$ or the reasoning trace on top of it does not help, but removing $x$ does hurt. From a practical standpoint, this is a useful finding, since scoring on $x$ alone is the cheapest of the variants that retain efficiency: the support decision can be made before any generation is committed. This finding is however task specific- and in practice this might change depending on the underlying task.

Figure~\ref{app-fig:operational-variants} reports the empirical comparison on Gemini-2.5-Flash, DDXPlus, with the Anchored-Gemini score. We include the three variants above as well as two reference points that drop the input from the score: scoring on $y_0$ alone, and scoring on the reasoning trace alone. All five variants control the missed-support error at the target $\alpha$, consistent with Theorem~\ref{thm:finite-sample-control}. On support efficiency, the variants based on $x$ or the reasoning trace (with or without $y_0$) achieve essentially identical support rates, while the variant based on $y_0$ alone is noticeably worse. The takeaway is that $x$ and the reasoning trace each carry most of the signal needed to predict whether support would be beneficial, while $y_0$ alone does not. From a practical standpoint, this is a useful finding, since scoring on $x$ alone is the cheapest of the efficient variants: the support decision can be made before any generation is committed. This finding is, however, task-specific, and the relative informativeness of $x$, $y_0$, and the reasoning trace may differ depending on the complexity of the underlying task in practice. 

\begin{figure}[htp]
    \centering
    \includegraphics[width=\linewidth]{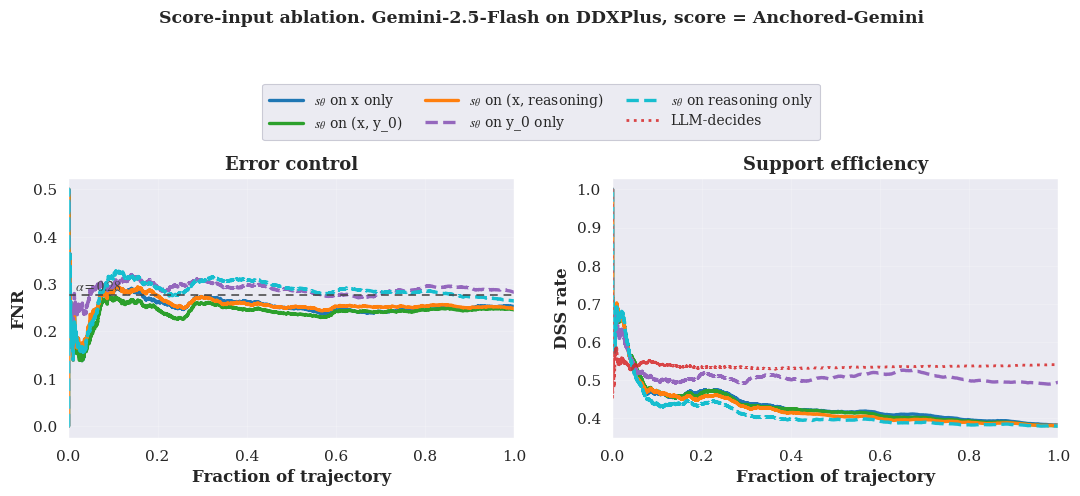}
    \caption{Score-input ablation. Base agent: Gemini-2.5-Flash. Task: DDXPlus. Score: Anchored-Gemini. Left: cumulative missed-support error against the target $\alpha$. Right: cumulative support rate, with the \textsc{LLM-Decides} baseline shown for reference. The five variants score the input alone ($s_\theta$ on $x$), the input together with the unsupported outcome ($s_\theta$ on $(x, y_0)$), the input together with the agent's reasoning trace ($s_\theta$ on $(x, \mathrm{reasoning})$), the unsupported outcome alone ($s_\theta$ on $y_0$), or the reasoning trace alone ($s_\theta$ on $\mathrm{reasoning}$). All five variants control the missed-support error at $\alpha$. Variants that include $x$ achieve similar support rates; dropping $x$ degrades efficiency.}
    \label{app-fig:operational-variants}
\end{figure}

\end{document}